\documentclass[11pt]{article}

\usepackage[final]{acl}

\usepackage{times}
\usepackage{latexsym}

\usepackage[T1]{fontenc}

\usepackage[utf8]{inputenc}

\usepackage{microtype}

\usepackage{inconsolata}

\usepackage{algorithmic}
\usepackage{algorithm}
\usepackage{graphicx}
\usepackage{amsthm}

\usepackage{amsmath,amsfonts,bm}

\def\eqref#1{equation~\ref{#1}}

\def\1{\bm{1}}

\DeclareMathAlphabet{\mathsfit}{\encodingdefault}{\sfdefault}{m}{sl}
\SetMathAlphabet{\mathsfit}{bold}{\encodingdefault}{\sfdefault}{bx}{n}

\usepackage[capitalize]{cleveref}
\theoremstyle{remark}

\newtheorem{theorem}{Theorem}[section]
\newtheorem{lemma}{Lemma}[section]

\newtheorem{assumption}{Assumption}[section]
\newtheorem{proposition}{Proposition}[section]
\crefname{theorem}{Thm.}{Thms.}
\Crefname{theorem}{Theorem}{Theorems}
\crefname{proposition}{Prop.}{Props.}
\Crefname{proposition}{Proposition}{Propositions}
\crefname{lemma}{Lem.}{Lems.}
\Crefname{lemma}{Lemma}{Lemmas}
\crefname{corollary}{Cor.}{Cors.}
\Crefname{corollary}{Corollary}{Corollaries}
\crefname{definition}{Def.}{Defs.}
\Crefname{definition}{Definition}{Definitions}
\crefname{assumption}{Assump.}{Assumps.}
\Crefname{assumption}{Assumption}{Assumptions}
\crefname{remark}{Rem.}{Rems.}
\Crefname{remark}{Remark}{Remarks}
\crefname{algorithm}{Alg.}{Algs.}
\Crefname{algorithm}{Algorithm}{Algorithms}
\def\method{AdaLeZO}

\usepackage{xcolor}
\usepackage{booktabs}
\usepackage{multirow}
\usepackage{subcaption}
\usepackage{amssymb}
\usepackage{colortbl}

\newcommand{\fei}[1]{#1}

\title{Universally Empowering Zeroth-Order Optimization via Adaptive Layer-wise Sampling}

\author{
  \textbf{Fei~Wang\textsuperscript{1,2}},
  \textbf{Li~Shen\textsuperscript{3,5,*}},
  \textbf{Liang~Ding\textsuperscript{4}},
  \textbf{Chao~Xue\textsuperscript{2}},
  \textbf{Ye~Liu\textsuperscript{1}},
  \textbf{Changxing~Ding\textsuperscript{1,}\thanks{Corresponding author.}}
\\
\\
  \textsuperscript{1}South China University of Technology,
  \textsuperscript{2}JD Explore Academy,
\\
  \textsuperscript{3}Shenzhen Campus of Sun Yat-sen University,
  \textsuperscript{4}University of Sydney,
\\
  \textsuperscript{5}Center for AI Theoretical Foundation and Systems, Shenzhen Loop Area Institute
\\
  \small{
    \href{mailto:ft_feiw@mail.scut.edu.cn}{ft\_feiw@mail.scut.edu.cn}, \href{mailto:chxding@scut.edu.cn}{chxding@scut.edu.cn}
  }
}

\begin{document}
\maketitle

\begin{abstract}
Zeroth-Order optimization presents a promising memory-efficient paradigm for fine-tuning Large Language Models by relying solely on forward passes. 
However, its practical adoption is severely constrained by slow wall-clock convergence and high estimation variance. 
In this work, we dissect the runtime characteristics of ZO algorithms and identify a critical system bottleneck where the generation of perturbations and parameter updates accounts for over 40\% of the training latency. 
We argue that the standard uniform exploration strategy is fundamentally flawed as it fails to account for the heterogeneous sensitivity of layers in deep networks, resulting in computationally wasteful blind searches. 
To address this structural mismatch, we propose \textbf{\method{}}, an \textbf{Ada}ptive \textbf{L}ayer-wis\textbf{e} \textbf{ZO} optimization framework. 
By formulating the layer selection process as a non-stationary Multi-Armed Bandit problem, \method{} dynamically allocates the limited perturbation budget to the most sensitive parameters.
We further introduce an Inverse Probability Weighting mechanism based on sampling with replacement, which guarantees unbiased gradient estimation while effectively acting as a temporal denoiser to reduce variance. 
Extensive experiments on LLaMA and OPT models ranging from 6.7B to 30B parameters demonstrate that \method{} achieves $1.7\times$ to $3.0\times$ wall-clock acceleration compared to state-of-the-art methods. Crucially, \method{} functions as a universal plug-and-play module that seamlessly enhances the efficiency of existing ZO optimizers without incurring additional memory overhead.~\footnote{\fei{The official implementation is available at \url{https://github.com/WangFei-2019/UnifiedZO}.}}
\end{abstract}
\section{Introduction}
\label{sec:intro}

Large Language Models (LLMs)~\citep{bai2023qwen,achiam2023gpt,bi2024deepseek,grattafiori2024llama,cai2026joyai} have demonstrated exceptional generalization capabilities across a broad spectrum of natural language processing tasks~\citep{liang2024controllable, zhu2024multilingual}. 
To adapt these general-purpose models to specialized downstream domains, Full Fine-Tuning~(FFT) remains the gold standard for achieving optimal performance~\citep{commonIT,rao-etal-2025-apt}. 
Nevertheless, FFT imposes prohibitive memory requirements as it necessitates the storage of optimizer states and gradient histories~\citep{kingma2015adam}.
While Parameter-Efficient Fine-Tuning methods~\citep{zhao2024galore,hu2022lora,li2021prefix}, such as LoRA~\citep{hu2022lora} and Prefix-Tuning~\citep{li2021prefix}, significantly reduce the number of trainable parameters, they still rely on backpropagation. Consequently, these methods require the transient storage of massive intermediate activations proportional to the network depth, which renders the fine-tuning of billion-scale models on consumer-grade hardware computationally infeasible.

\begin{figure*}[t!]
    \centering
    \includegraphics[width=0.9\textwidth]{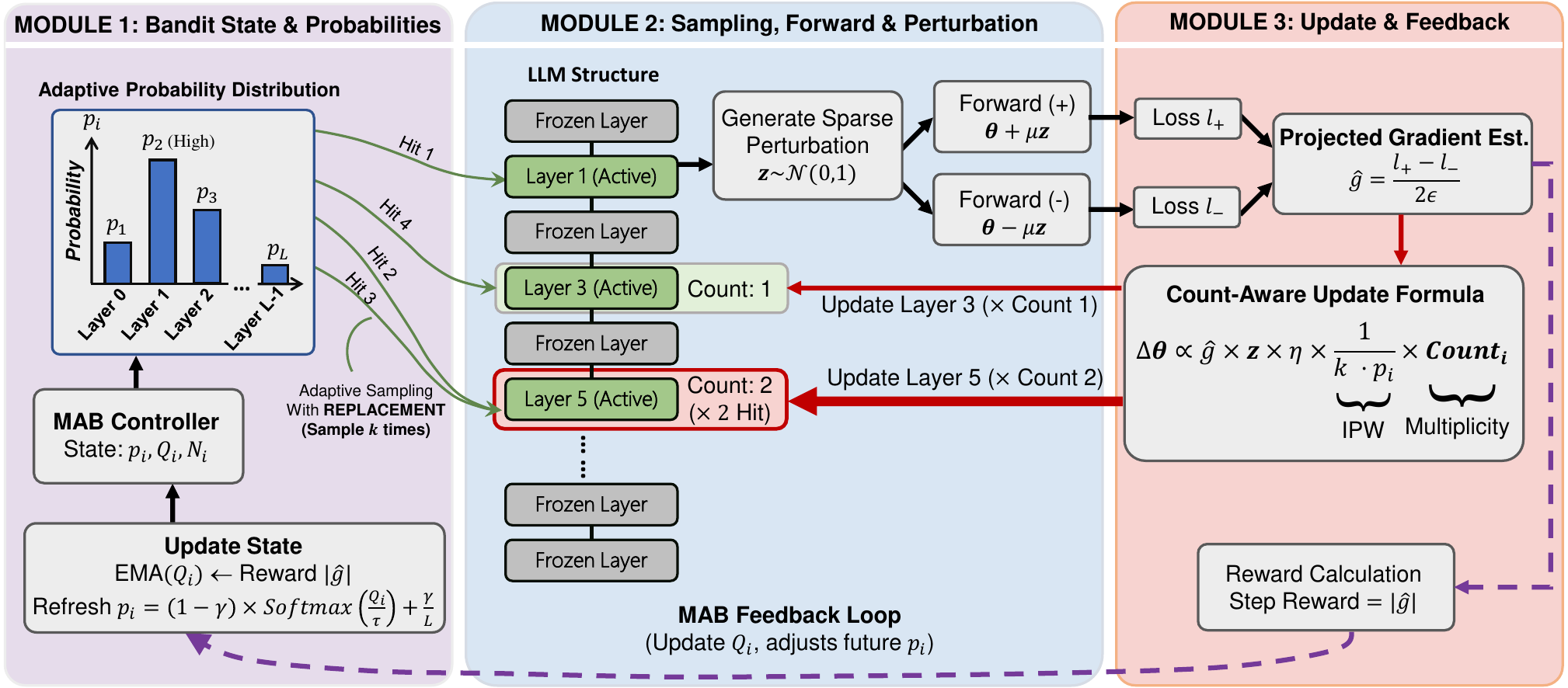}
    \caption{
    The \method{} workflow. \method{} overcomes the computational inefficiency of uniform ZO exploration by employing the MAB framework to allocate sparse perturbations to sensitive layers adaptively. Key modules include: (1) dynamic layer selection guided by real-time MAB statistics, (2) adaptive sampling with replacement to concentrate the perturbation budget on critical layers, and (3) a count-aware IPW update formula that ensures unbiased gradient estimation and enables efficient, feedback-driven optimization.}
    \label{fig:overview}
\end{figure*}

To circumvent the memory bottleneck intrinsic to backpropagation, Zeroth-Order~(ZO) optimization has recently garnered renewed interest~\citep{chen2024deepzero,wang2024simultaneous,zhang2024revisiting}. By estimating gradients through finite differences using only two forward passes, ZO methods such as MeZO~\citep{malladi2023fine} successfully compress the memory footprint of training to inference levels. Despite this breakthrough, standard ZO optimization is plagued by the ``curse of dimensionality,'' where the variance of gradient estimates scales linearly with the parameter size, leading to sluggish convergence and instability~\citep{chen2025enhancing,sun2025tezo}. Prior research has primarily focused on algorithmic refinements, such as introducing momentum or subspace constraints, to mitigate high variance~\citep{chen2019zo,jiang2024zo,chen2025enhancing,yu2025zeroth,zhao2025secondorder}. However, these approaches often overlook the significant wall-clock latency induced by high-dimensional operations.

In this work, we revisit ZO optimization from two complementary perspectives to uncover the root causes of its inefficiency. From a systems perspective, we observe that the operations required for perturbation generation and parameter updates incur a linear time complexity. Our breakdown analysis on an OPT-6.7B model reveals that these operations constitute nearly half of the per-step training time, creating a substantial linear bottleneck that limits scalability. From an optimization perspective, we identify a phenomenon we term ``Policy Blindness.'' While gradient information in LLMs is heterogeneously distributed across layers~\citep{wang2025layer}, standard ZO methods employ isotropic exploration strategies that treat all parameters equally. This mismatch results in a squandering of computational resources on insensitive layers that contribute minimal learning signals.

To address these dual challenges of computational redundancy and optimization blindness, we propose \textbf{Ada}ptive \textbf{L}ayer-wise \textbf{Z}eroth-\textbf{O}rder optimization (\method{}). Unlike heuristic approaches that rely on static priors, \method{} formalizes the optimization process as a layer selection problem within a Multi-Armed Bandit~(MAB) framework. This formulation allows the optimizer to act as a temporal filter, which integrates noisy instantaneous rewards to reveal the underlying sensitivity structure of the model. By dynamically concentrating the perturbation budget on the most critical layers, \method{} achieves sparse and efficient updates. To ensure theoretical rigor, we design a gradient estimator using Inverse Probability Weighting~(IPW) with replacement. This estimator ensures unbiasedness with respect to the full-parameter gradient while significantly reducing the estimation variance through importance sampling. The overall workflow of \method{} is depicted in \Cref{fig:overview}.

We empirically validate the effectiveness of our framework on models ranging from 6.7B to 30B parameters, including the LLaMA-3.1. The results demonstrate that \method{} delivers substantial wall-clock speedups of $1.7\times$ to $3.0\times$ over baseline methods while maintaining or surpassing competitive accuracy. Furthermore, \method{} exhibits strong versatility as a plug-and-play accelerator that consistently improves the performance of advanced ZO variants, such as LoZO~\citep{chen2025enhancing} and HiZOO~\citep{zhao2025secondorder}. In summary, our main contributions are as follows:
\begin{enumerate}
    \item We identify the linear cost of perturbation and update operations as a major bottleneck in the ZO optimizer and reveal the structural mismatch between uniform ZO exploration and the intrinsic layer-wise sparsity of gradients.
    \item We introduce \method{}, a novel framework that leverages a multi-armed bandit strategy to adaptively allocate perturbations, enabling efficient sparse optimization with theoretical unbiasedness guarantees.
    \item Extensive experiments confirm that \method{} provides significant wall-clock acceleration and universal compatibility with existing ZO algorithms, offering a scalable solution for memory-constrained LLM fine-tuning.
\end{enumerate}

\section{Related Work}
This section surveys the evolution of memory-efficient fine-tuning, focusing on the transition from parameter-efficient methods to ZO optimization and the integration of adaptive mechanisms.

\paragraph{Zeroth-Order Optimization for LLMs.}
While Parameter-Efficient Fine-Tuning methods like LoRA~\citep{hu2022lora} and Prefix-Tuning~\citep{li2021prefix} reduce trainable parameters, they remain constrained by the ``memory wall'' due to the storage of intermediate activations for backpropagation~\citep{dettmers2023qlora,liu2022p}. 
ZO optimization has emerged as a powerful alternative, enabling LLM fine-tuning with inference-level memory footprints by estimating gradients via forward pass differences~\citep{malladi2023fine,zhang2024revisiting,chen2024deepzero}. 
However, standard ZO methods (e.g., MeZO) suffer from the ``curse of dimensionality,'' where gradient estimation variance scales linearly with parameter size, leading to slow convergence~\citep{liu2020primer}. 
Although variants like Zo-Adamu~\citep{jiang2024zo} and MeZO-SVRG~\citep{gautam2024variance} introduce momentum or variance reduction to smooth the optimization trajectory, they largely overlook the linear wall-clock latency imposed by full-parameter perturbations. 
\fei{Recently, QuZO~\citep{zhou2025quzo} and MaZO~\citep{zhang2025mazo} have sought to further accelerate training via quantization, multi-task masking, or kernel transformations. However, these methods primarily optimize the temporal dimension or numerical precision. In contrast, \method{} is orthogonally designed to optimize the \textit{spatial dimension} by sparsifying $\mathcal{O}(d)$ memory reads and writes, acting as a plug-and-play module that can synergize seamlessly with these advancements for simultaneous spatial-temporal acceleration.}

\paragraph{Subspace Exploration and Structural Priors.}
To mitigate the high variance of ZO estimates, a growing body of work exploits the intrinsic low-dimensional structure of LLMs. 
Methods such as LoZO~\citep{chen2025enhancing}, SubZero~\citep{yu2025zeroth}, and TeZO~\citep{sun2025tezo} constrain perturbations to low-rank matrices or tensor decompositions, effectively reducing the search space. 
Others, including HiZOO~\citep{zhao2025secondorder} and LOREN~\citep{seung2025low}, leverage approximate second-order information or gradient priors to guide update directions. 
Despite the theoretical appeal, these approaches often rely on static structural assumptions (e.g., fixed rank) or incur expensive auxiliary computations that negate wall-clock speedups. 
Unlike these, which impose rigid constraints, \method{} focuses on \textit{dynamic structure discovery}, autonomously identifying sensitive layers during training without pre-defined priors.

\begin{figure*}[t!]
    \centering
    \begin{subfigure}[b]{0.48\textwidth}
    \includegraphics[width=\textwidth]{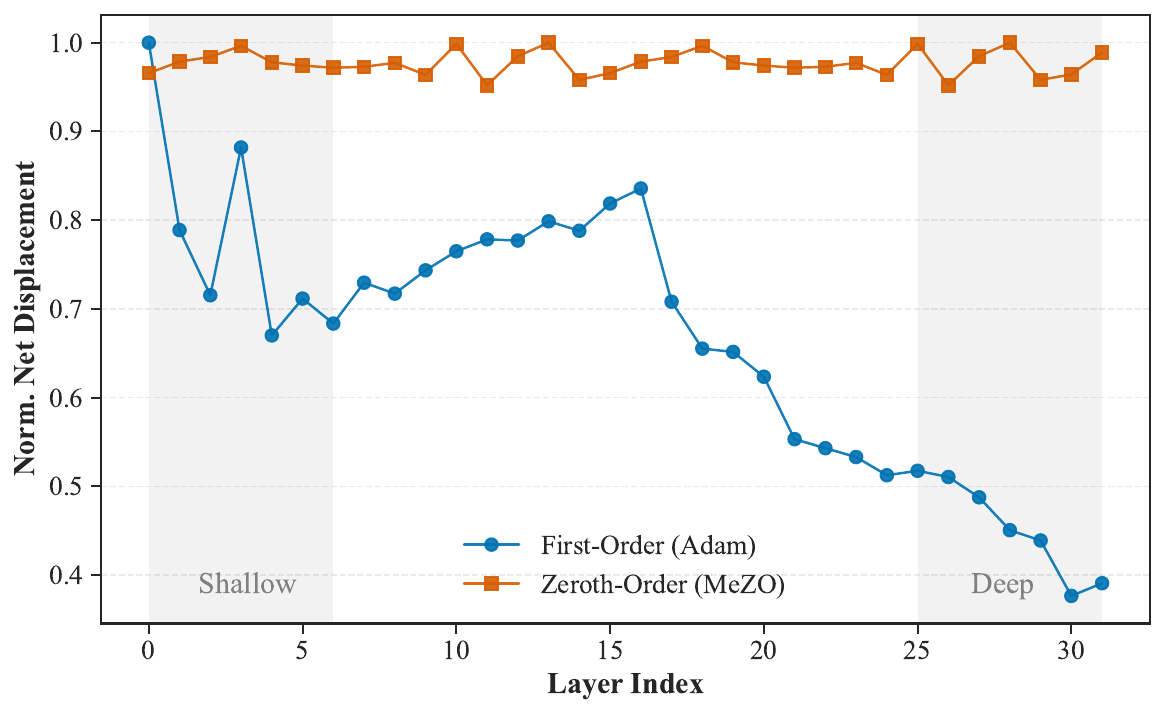}
        \caption{Layer-wise Net Displacement}
        \label{fig:zo_displacement}
    \end{subfigure}
    \hfill
    \begin{subfigure}[b]{0.48\textwidth}
        \includegraphics[width=\textwidth]{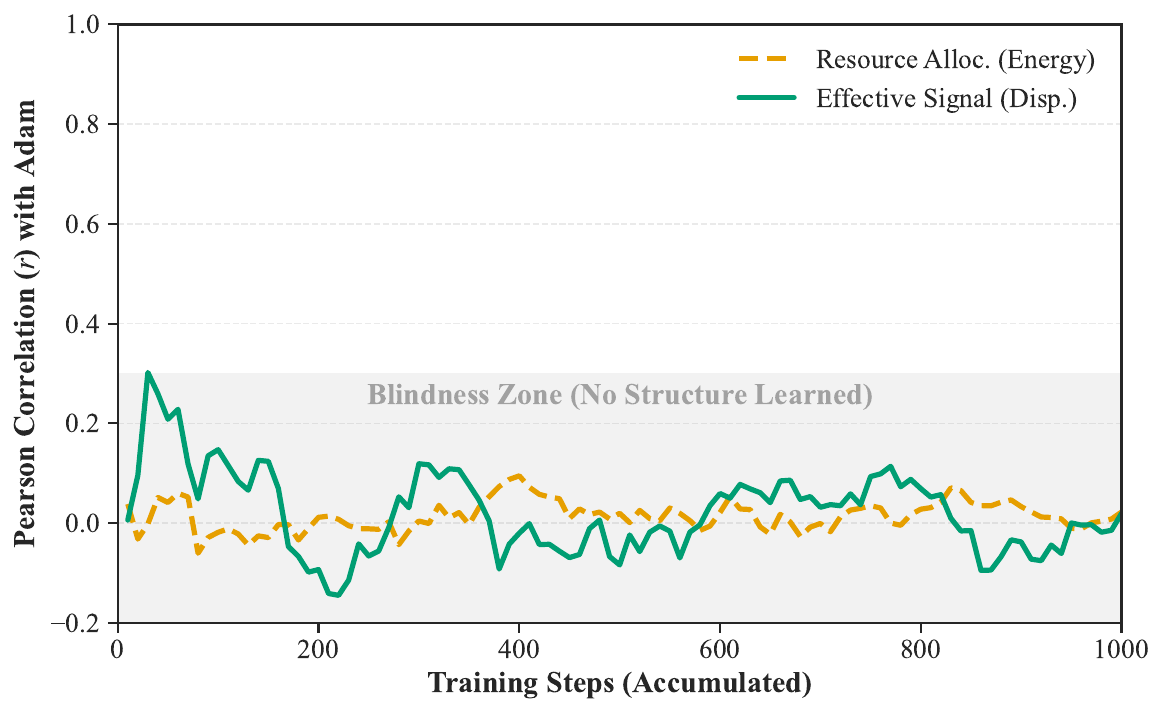}
        \caption{Pearson Correlation between ZO and FO Gradient Norm}
        \label{fig:pearson_zo_fo}
    \end{subfigure}
    \caption{
        \textbf{Empirical demonstration of Policy Blindness.} 
        We contrast the optimization dynamics of MeZO against those of Adam on OPT-6.7B.
        \textbf{(a) Layer-wise Net Displacement.} While the Adam exhibits distinct layer-wise heterogeneity by prioritizing updates on shallow layers, MeZO maintains a uniform update profile. This indicates that standard ZO methods squander the computational budget on insensitive parameters.
        \textbf{(b) Correlation Evolution.} The Pearson correlation between MeZO's cumulative updates and the Oracle gradient norm remains consistently low ($r < 0.2$). This confirms that isotropic perturbation fails to recover the intrinsic sensitivity structure of the model, resulting in a ``blind' random walk.
    }
    \label{fig:zo_policy}
\end{figure*}

\paragraph{Adaptive Resource Allocation.}
Dynamic sparsity has proven effective in post-training pruning~\citep{frantar2023sparsegpt,sun2024a}, where redundant weights are removed based on activation or Hessian sensitivity. 
\fei{In the context of First-Order (FO) fine-tuning, layer-wise and module-wise importance sampling techniques, such as LISA~\citep{pan2024lisa} and MISA~\citep{liu2025misa}, have demonstrated substantial efficiency gains. However, these methods fundamentally rely on exact FO gradients or intermediate activation caching to evaluate structural sensitivity.
Under strict ZO memory constraints, such FO priors are physically unavailable.}
In parallel, dynamic sampling and MAB frameworks have been successfully applied to sequential decision-making tasks in NLP, such as data selection and hyperparameter tuning~\citep{bouneffouf2025multi,lin2023bandit,ceritli2024study,rao2025dynamicsamplingadaptsiterative}. 
\method{} bridges these domains by integrating MAB into continuous ZO optimization. Crucially, it executes adaptive layer-wise importance sampling \textit{without} FO priors, relying solely on highly noisy, forward-pass scalar loss feedback. 
By treating layer selection as a bandit problem, \method{} realizes \textit{adaptive update sparsity}: it dynamically concentrates the perturbation budget on the most sensitive layers. 
This design resolves the ``blindness'' of uniform ZO exploration, ensuring theoretical unbiasedness while significantly reducing computational overhead.
\section{Rethinking Zeroth-Order Optimization}
\label{sec:rethink}
Despite the memory efficiency of ZO optimization, its widespread adoption is hindered by slow convergence and high variance. To elucidate the root causes of the limitations, we dissect ZO algorithms through the lenses of optimization dynamics, system latency, and signal fidelity. These empirical insights provide the motivational foundation for \method{}.

\subsection{Optimization Dynamics: Policy Blindness}
\label{subsec:rethink_dynamics}

A fundamental inefficiency in current ZO methods arises from a structural mismatch between the uniform exploration strategy and the heterogeneous sensitivity of LLM layers. We term this phenomenon \textit{Policy Blindness}.
To quantify this, we compare the parameter evolution of a FO oracle (Adam~\citep{kingma2015adam}) against MeZO using \textit{Layer-wise Net Displacement} ($||\sum \Delta \theta||_2$) as a proxy for effective learning signals.
As illustrated in \Cref{fig:zo_policy}, the FO oracle exhibits distinct layer-wise heterogeneity where shallow and middle layers accumulate significant updates while deeper layers remain relatively static~\citep{clark2019bert, aghajanyan2021intrinsic}.
In stark contrast, MeZO employs an isotropic perturbation strategy that distributes the computational budget uniformly across the entire parameter space.
This homogeneous allocation squanders resources on insensitive parameters that contribute negligibly to loss reduction, thereby injecting excessively high-dimensional noise that impedes convergence.

\begin{figure}[t!]
    \centering
    \includegraphics[width=\columnwidth]{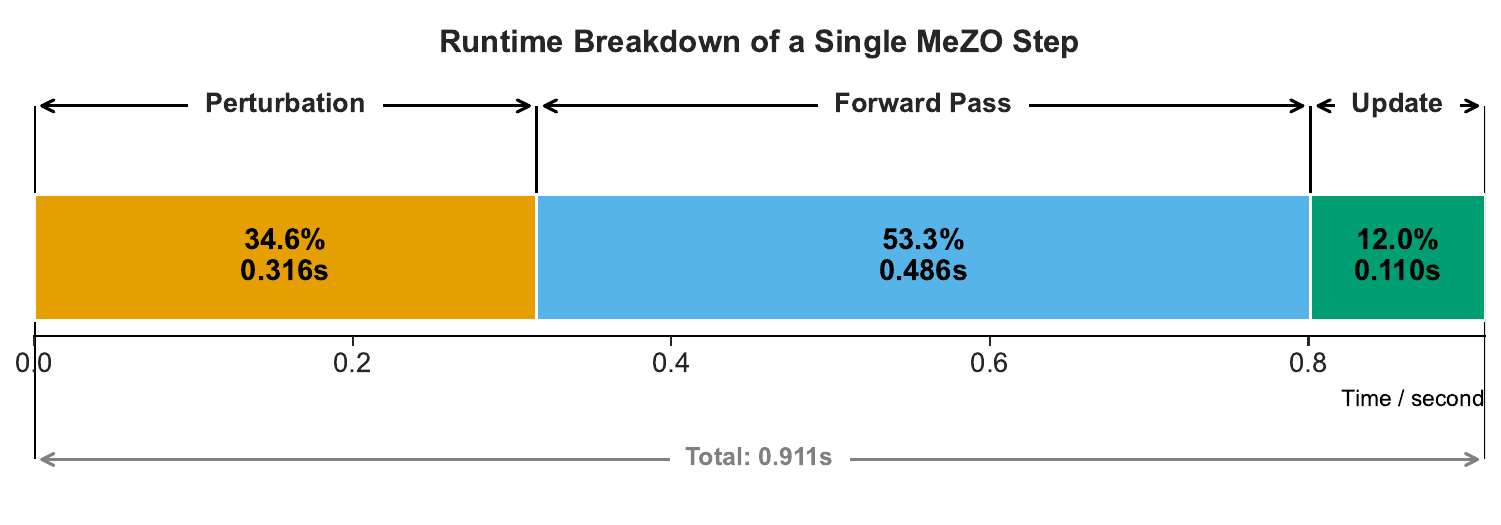}
    \caption{
    Time breakdown of a single training step in MeZO. The operations for perturbation generation and parameter updates constitute approximately 46\% of the total step time, comparable to the forward pass time.
    }
    \label{fig:mezo_runtime}
\end{figure}

\subsection{System Efficiency: Linear Bottleneck}
\label{subsec:rethink_efficiency}

Beyond optimization inefficiency, full-parameter perturbations introduce a critical system bottleneck. 
Runtime profiling on an OPT-6.7B model, as shown in \Cref{fig:mezo_runtime}, reveals that perturbation generation and parameter updates account for approximately 46\% of the total training time per step.
This cost scales linearly with the parameter dimension $d$, rendering it a dominant factor for billion-scale models.
Furthermore, classical ZO theory indicates that gradient estimation variance also scales with $d$~\citep{nesterov2017random}.
These observations suggest that restricting perturbations to a sparse subset of sensitive layers can simultaneously eliminate this linear wall-clock overhead and reduce estimation variance, provided that the active layers are correctly identified.

\subsection{Signal Fidelity: Validity of Feedback}
\label{subsec:rethink_fidelity}

A prerequisite for adaptive layer selection is the existence of a reliable signal within the noisy ZO estimates to guide the search.
We investigate whether the magnitude of the ZO gradient estimate $|\hat{g}|$ serves as a valid proxy for the true gradient norm $\|\nabla \mathcal{L}\|_F$.
Our fidelity analysis in \Cref{fig:signal_fidelity} yields two key findings.
First, at the \textit{micro-level}, despite the high variance intrinsic to random projection, we observe a significant positive Spearman correlation ($\rho \approx 0.48$) between the ZO estimate and the ground-truth gradient norm.
Second, at the \textit{macro-level}, binned analysis reveals a strictly monotonic relationship between the expected ZO magnitude and the true gradient norm.
This statistical consistency implies that while individual ZO samples are noisy, their expectation faithfully preserves the relative ranking of layer importance.
Consequently, sequential decision algorithms such as Multi-Armed Bandits can effectively exploit this property to recover the true sensitivity structure through temporal aggregation.

\section{Methodology}
\label{sec:method}

In this section, we first revisit the standard paradigm of ZO optimization for fine-tuning LLMs and analyze its inherent computational bottlenecks. Subsequently, we propose the \method{} framework. We formulate layer-wise selection as a non-stationary MAB problem and introduce a sparse gradient estimator based on IPW with replacement, achieving efficient, sparse, and low-variance optimization.

\subsection{Preliminaries: The Linear Bottleneck of ZO Optimization}

Consider an LLM parameterized by $\theta \in \mathbb{R}^d$. Our objective is to optimize the loss function $\mathcal{L}(\theta) = \mathbb{E}_{\mathcal{D}}[f(\theta; x, y)]$. To circumvent the prohibitive memory cost of storing intermediate activations required by backpropagation, standard ZO methods, such as MeZO~\cite{malladi2023fine}, employ the Simultaneous Perturbation Stochastic Approximation (SPSA) algorithm to estimate gradients. At step $t$, the algorithm generates a random perturbation vector $z_t \sim \mathcal{N}(0, I_d)$ drawn from a standard normal distribution and computes the gradient estimate via two forward passes:
\begin{equation}
\label{eq:standard_zo}
\hat{g}_t^{\text{ZO}} = \frac{\mathcal{L}(\theta_t + \mu z_t) - \mathcal{L}(\theta_t - \mu z_t)}{2\mu} z_t,
\end{equation}
where $\mu$ denotes the smoothing parameter (perturbation radius). The parameters are updated following $\theta_{t+1} = \theta_t - \eta \hat{g}_t^{\text{ZO}}$.

Although MeZO successfully eliminates the memory bottleneck, \Cref{eq:standard_zo} reveals a fundamental flaw regarding computational efficiency: $z_t$ is \textbf{dense}. Consequently, every optimization step necessitates sampling, adding, and updating all $d$ parameters. As analyzed in \Cref{subsec:rethink_efficiency}, in the context of billion-scale models ($d \ge 7\text{B}$), this full-parameter operation incurs a linear time complexity of $O(d)$, accounting for over 40\% of the total training time. Furthermore, applying uniform perturbation across the entire high-dimensional parameter space introduces significant estimation variance, known as the ``curse of dimensionality''~\cite{liu2020primer, nesterov2017random}, which severely hampers convergence.

\subsection{Adaptive Layer Selection via Multi-Armed Bandit}

To mitigate the aforementioned computational redundancy and high variance, \method{} leverages the layer-wise heterogeneity of LLM gradients by modeling the selection of trainable layers as a Multi-Armed Bandit (MAB) problem. We partition the model parameters into $L$ groups (layers), denoted as $\{\theta^{(1)}, \dots, \theta^{(L)}\}$. Our goal is to dynamically learn a sampling policy $\pi_t$ that allocates the limited perturbation budget to the layers most sensitive to the loss function.

\paragraph{Reward Definition.}
We require a reward signal to quantify the contribution of a specific layer to the optimization process. We define the immediate reward $R_t$ at step $t$ as the magnitude of the estimated scalar gradient. Intuitively, if a perturbation induces a significant change in the loss, it indicates that the optimization direction lies in a steep region of the loss landscape, thereby offering higher optimization value:
\begin{equation}
R_t = \left| \frac{\mathcal{L}(\theta_t + \mu z_t) - \mathcal{L}(\theta_t - \mu z_t)}{2\mu} \right|.
\end{equation}
This scalar serves as a proxy for the effectiveness of the current step. As analyzed in \Cref{app:theory}, the optimal sampling probability is proportional to the gradient norm, i.e., $p_t(l) \propto \|\nabla^{(l)} \mathcal{L}\|$, validating $R_t$ as an ideal proxy signal.

\paragraph{Value Estimation (EMA).}
Since the sensitivity of layers varies dynamically during training (i.e., the environment is non-stationary), we employ an Exponential Moving Average (EMA) to maintain the value estimate $Q_t(l)$ for each layer. For every layer $l$ in the selected active set $\mathcal{I}_t$ at step $t$, the value is updated as:
\begin{equation}
Q_{t+1}(l) = (1 - \alpha) Q_t(l) + \alpha R_t,
\end{equation}
where $\alpha$ is the learning rate factor. For unselected layers, $Q$ remains unchanged. This design allows the algorithm to smooth out historical noise while rapidly adapting to the distribution shift in layer importance.

\paragraph{Policy with Exploration-Exploitation.}
To balance the exploitation of highly sensitive layers with the exploration of under-sampled ones, we compute the sampling probability distribution $p_t$ based on the current $Q_t$ values. For the $l$-th layer:
\begin{equation}
p_t(l) = (1 - \gamma) \cdot \underbrace{\text{Softmax}(Q_t(l) / \tau)}_{\text{Exploitation}} + \gamma \cdot \underbrace{\frac{1}{L}}_{\text{Exploration}},
\label{eq:prob}
\end{equation}
where $\tau$ is the temperature coefficient controlling distribution smoothness, and $\gamma \in [0, 1]$ is the mixing coefficient. The inclusion of the uniform distribution term $\frac{1}{L}$ is critical; it guarantees a non-zero lower bound probability $p_t(l) \ge \gamma/L$, preventing the permanent ``starvation'' of layers, which is a necessary condition for the unbiasedness of the subsequent estimator.

\subsection{Sparse Gradient Estimation with IPW}

Based on the probability distribution $p_t$, we design a sparse gradient estimator utilizing Sampling with Replacement and IPW.

\paragraph{Sampling Mechanism.}
Unlike previous methods that employ sampling without replacement (Top-$k$), \method{} performs $K = \max(1, \lfloor \rho L \rfloor)$ independent draws \textbf{with replacement} based on $p_t$ at step $t$, where $\rho \in (0, 1]$ is the sampling ratio. Let $n_{t,l}$ denote the number of times layer $l$ is selected (multiplicity). If $n_{t,l} > 0$, the layer is marked as active, forming the set $\mathcal{I}_t = \{l \mid n_{t,l} > 0\}$.

\paragraph{Sparse Perturbation.}
We generate Gaussian noise only for the active layers to construct a sparse perturbation vector $\tilde{z}_t$:
\begin{equation}
\tilde{z}_t^{(l)} = \begin{cases} 
\mathcal{N}(0, I_{d_l}) & \text{if } l \in \mathcal{I}_t, \\
0 & \text{otherwise}.
\end{cases}
\end{equation}
Since $|\mathcal{I}_t| \le K \ll L$, this operation significantly reduces the overhead of perturbation generation and parameter updates from $O(d)$ to $O(\rho d)$.

\paragraph{Count-Aware IPW Estimator.}
Directly using sparse perturbation introduces bias. To address this, we apply importance-sampling reweighting, incorporating the counting property of sampling with replacement, the \method{} gradient estimate for layer $l$ is defined as:
\begin{equation}
\label{eq:adalezo_grad_update}
    \hat{g}_t^{\text{Ada},(l)} = \hat{g}_{\text{scalar}} \cdot w_{t,l} \cdot n_{t,l} \cdot \tilde{z}_t^{(l)},
\end{equation}
where the projected gradient scalar is $\hat{g}_{\text{scalar}} \triangleq \frac{\mathcal{L}(\theta_t + \mu \tilde{z}_t) - \mathcal{L}(\theta_t - \mu \tilde{z}_t)}{2\mu}$, and the IPW weight is $w_{t,l} \triangleq \min\left(\frac{1}{K p_t(l)}, C_{\text{clip}}\right)$.
In this formulation, the multiplicity $n_{t,l}$ leverages the statistical nature of sampling with replacement, ensuring that important layers selected multiple times ($n_{t,l} > 1$) receive larger update steps to automatically intensify optimization on sensitive parameters. Simultaneously, the IPW weight $w_{t,l}$ functions as a clipped inverse probability weight which, as proved in \Cref{app:theory}, converges to an unbiased estimator of the full-parameter Gaussian smoothed gradient, i.e., $\mathbb{E}[\hat{g}_t^{\text{Ada}}] = \nabla \mathcal{L}_\mu(\theta_t)$, as the clipping threshold $C_{\text{clip}} \to \infty$. Furthermore, regarding variance reduction, while standard IPW ensures unbiasedness, extremely small sampling probabilities $p_t(l)$ can induce numerical instability; thus, the clipping threshold $C_{\text{clip}}$ is introduced to establish a bias-variance trade-off that significantly reduces estimation variance in practical scenarios.

The final parameter update is performed strictly on active layers: $\theta_{t+1}^{(l)} = \theta_t^{(l)} - \eta \hat{g}_t^{\text{Ada},(l)}$. The complete algorithmic procedure is detailed in \Cref{alg:adalezo}. We further provide a rigorous theoretical proof in \Cref{app:theory} demonstrating that \method{} maintains a convergence rate of $O(1/\sqrt{T})$.
\section{Experiments}
\label{sec:exp}

\begin{figure}[htbp]
    \centering
    \vspace{-4pt}
    \includegraphics[width=\columnwidth]{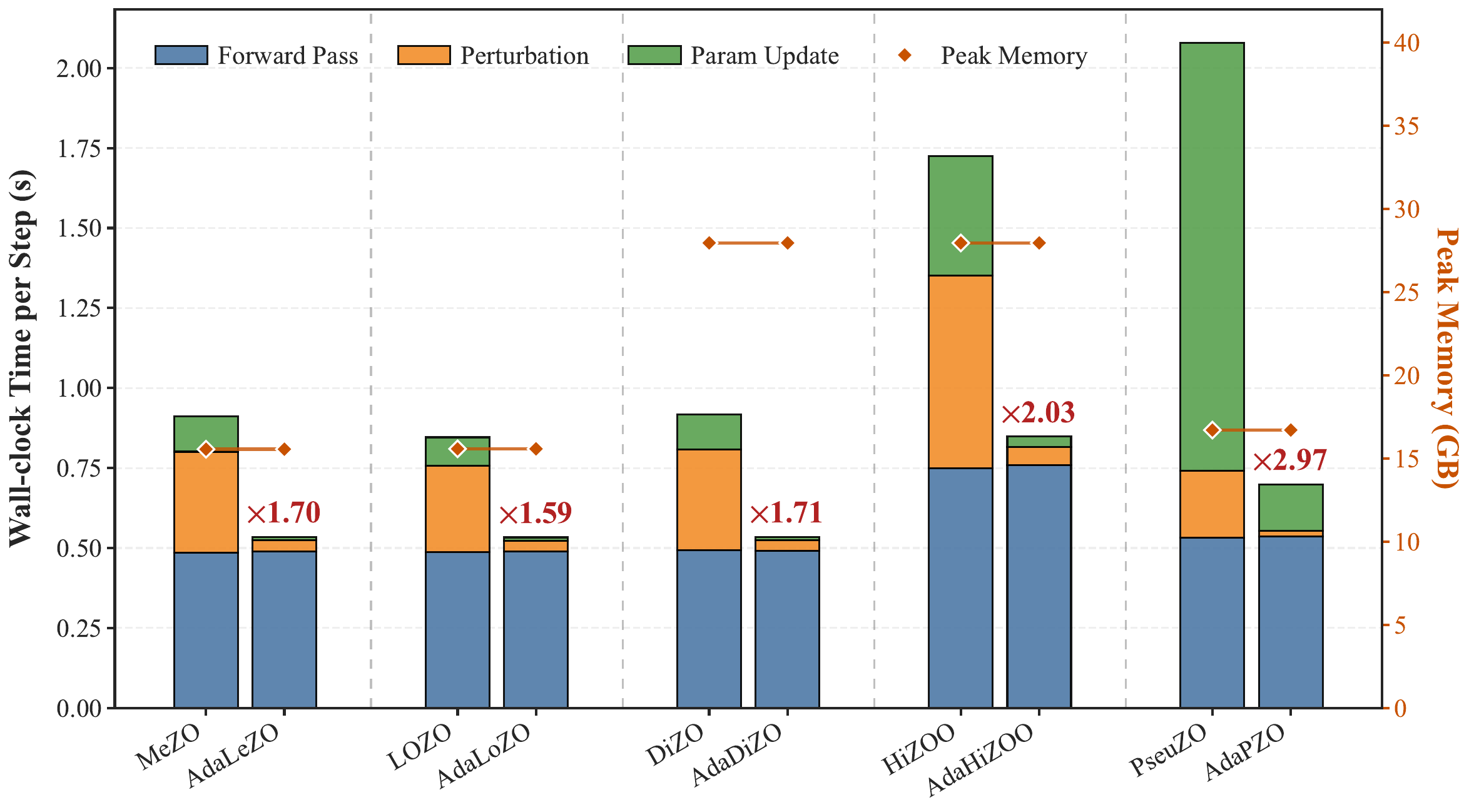}
    \caption{
    Breakdown of wall-clock time per training step and peak memory consumption across different ZO optimizations. The stacked bars represent the time cost of forward pass, perturbation generation, and parameter update, while the orange diamonds indicate peak memory usage. Our proposed Ada- methods significantly compress the overhead of perturbation and updates. 
    }
    \label{fig:latency_memory}
\end{figure}

\begin{table*}[t]
\centering
\caption{\textbf{Main Results on LLaMA Series.} We compare AdaLeZO against the Zero-shot baseline and MeZO (Full-parameter ZO). The results are averaged over 3 random seeds. The best performance between ZO methods is marked in \textbf{bold}.}
\label{tab:llama_results}
\resizebox{\textwidth}{!}{%
\begin{tabular}{l|ccccccc|cc|cc|c}
\toprule
\multicolumn{1}{c|}{\multirow{2}{*}{\textbf{Method}}} & \multicolumn{7}{c|}{\textbf{Classification}} & \multicolumn{2}{c|}{\textbf{Multiple Choice}} & \multicolumn{2}{c|}{\textbf{Generation}} & \multirow{2}{*}{\textbf{AVG.}} \\
\cmidrule(lr){2-8} \cmidrule(lr){9-10} \cmidrule(lr){11-12}
\multicolumn{1}{c|}{} & SST-2 & RTE & CB & BoolQ & WSC & WIC & MultiRC & Copa & ReCoRD & SQuAD & DROP & \\ \midrule 
\multicolumn{13}{c}{\textit{\textbf{LLaMA-2-7B}}} \\ 
\midrule
Zero-shot & 58.14 & 63.18 & 32.14 & 70.60 & 36.54 & 50.16 & 45.50 & 78.00 & 80.70 & 55.47 & 20.20 & 53.69 \\
MeZO & 93.92{\scriptsize$\pm$0.90} & 63.66{\scriptsize$\pm$2.08} & 68.45{\scriptsize$\pm$2.73} & \textbf{78.60}{\scriptsize$\pm$0.52} & \textbf{60.58}{\scriptsize$\pm$3.33} & 61.91{\scriptsize$\pm$3.70} & \textbf{71.77}{\scriptsize$\pm$3.44} & \textbf{84.33}{\scriptsize$\pm$1.53} & 81.03{\scriptsize$\pm$0.70} & \textbf{88.64}{\scriptsize$\pm$0.63} & 40.95{\scriptsize$\pm$0.69} & 72.17{\scriptsize$\pm$1.84} \\
AdaLeZO & \textbf{94.00}{\scriptsize$\pm$0.13} & \textbf{66.19}{\scriptsize$\pm$2.71} & \textbf{69.05}{\scriptsize$\pm$2.73} & 76.30{\scriptsize$\pm$1.85} & 60.26{\scriptsize$\pm$3.89} & \textbf{62.91}{\scriptsize$\pm$2.51} & 70.43{\scriptsize$\pm$2.97} & 84.00{\scriptsize$\pm$1.00} & \textbf{81.40}{\scriptsize$\pm$0.82} & 88.49{\scriptsize$\pm$0.30} & \textbf{41.98}{\scriptsize$\pm$1.27} & \textbf{72.27}{\scriptsize$\pm$\textbf{1.83}} \\ 
\midrule
\multicolumn{13}{c}{\textit{\textbf{LLaMA-3.1-8B}}} \\ 
\midrule
Zero-shot & 59.75 & 46.57 & 44.64 & 76.30 & 50.78 & 59.62 & 62.30 & 64.88 & 83.70 & 85.00 & 28.46 & 60.18 \\
MeZO & \textbf{93.81}{\scriptsize$\pm$0.41} & \textbf{75.45}{\scriptsize$\pm$2.50} & 70.24{\scriptsize$\pm$1.03} & 80.73{\scriptsize$\pm$0.80} & \textbf{62.18}{\scriptsize$\pm$1.11} & 58.03{\scriptsize$\pm$1.94} & 74.87{\scriptsize$\pm$1.35} & 89.67{\scriptsize$\pm$0.58} & 83.90{\scriptsize$\pm$1.00} & 89.37{\scriptsize$\pm$1.26} & 60.78{\scriptsize$\pm$1.23} & 76.28{\scriptsize$\pm$\textbf{1.20}} \\
AdaLeZO & 92.93{\scriptsize$\pm$0.92} & 71.24{\scriptsize$\pm$5.26} & \textbf{72.02}{\scriptsize$\pm$2.73} & \textbf{81.20}{\scriptsize$\pm$0.96} & 61.22{\scriptsize$\pm$2.22} & \textbf{59.51}{\scriptsize$\pm$0.80} & \textbf{80.17}{\scriptsize$\pm$0.76} & \textbf{90.33}{\scriptsize$\pm$0.58} & \textbf{84.70}{\scriptsize$\pm$1.05} & \textbf{89.74}{\scriptsize$\pm$0.78} & \textbf{61.87}{\scriptsize$\pm$0.31} & \textbf{76.81}{\scriptsize$\pm$1.49} \\ 
\bottomrule
\end{tabular}%
}
\end{table*}
\begin{table*}[ht]
\centering
\caption{\textbf{Universal Effectiveness on OPT-6.7b.} We report the accuracy (\%) and standard deviation across three random seeds. We compare various ZO methods with their adaptive counterparts powered by our \method{} framework. The best result in each pair (e.g., MeZO vs. \method{}) is marked in \textbf{bold}.}
\label{tab:multi_methods_result}
\resizebox{\textwidth}{!}{%
\begin{tabular}{l|ccccccc|cc|cc|c}
\toprule
\multicolumn{1}{c|}{\multirow{2}{*}{\textbf{Method}}} & \multicolumn{7}{c|}{\textbf{Classification}} & \multicolumn{2}{c|}{\textbf{Multiple Choice}} & \multicolumn{2}{c|}{\textbf{Generation}} & \multirow{2}{*}{\textbf{AVG.}} \\
\cmidrule(lr){2-8} \cmidrule(lr){9-10} \cmidrule(lr){11-12}
\multicolumn{1}{c|}{} & SST-2 & RTE & CB & BoolQ & WSC & WIC & MultiRC & Copa & ReCoRD & SQuAD & DROP & \\ \midrule
Zero-shot & 61.24 & 54.87 & 50.00 & 63.10 & 37.50 & 51.25 & 44.50 & 82.00 & 76.00 & 36.48 & 17.70 & 52.24 \\
ICL & 84.29 & 65.70 & 57.14 & 70.40 & 51.92 & 53.61 & 50.20 & 81.00 & 76.90 & 74.30 & 27.75 & 63.02 \\
FT (Upper Bound) & 92.78 & 78.34 & 94.64 & 73.90 & 59.62 & 51.25 & 77.50 & 80.00 & 75.30 & 85.28 & 28.53 & 72.47 \\ 
\midrule
MeZO & 93.54{\scriptsize$\pm$0.48} & 65.46{\scriptsize$\pm$2.08} & 69.05{\scriptsize$\pm$1.03} & 67.13{\scriptsize$\pm$0.31} & \textbf{56.09}{\scriptsize$\pm$5.47} & 58.78{\scriptsize$\pm$1.10} & \textbf{66.10}{\scriptsize$\pm$2.51} & \textbf{81.33}{\scriptsize$\pm$1.53} & 78.03{\scriptsize$\pm$0.60} & \textbf{81.52}{\scriptsize$\pm$1.43} & 28.79{\scriptsize$\pm$1.04} & 67.80{\scriptsize$\pm$1.60} \\
\textbf{\method{}} & \textbf{93.58}{\scriptsize$\pm$0.23} & \textbf{67.39}{\scriptsize$\pm$1.50} & \textbf{70.24}{\scriptsize$\pm$2.73} & \textbf{67.93}{\scriptsize$\pm$0.49} & 55.45{\scriptsize$\pm$4.74} & \textbf{59.61}{\scriptsize$\pm$1.16} & 63.13{\scriptsize$\pm$1.67} & 80.33{\scriptsize$\pm$2.52} & \textbf{79.07}{\scriptsize$\pm$0.49} & 80.65{\scriptsize$\pm$0.23} & \textbf{29.19}{\scriptsize$\pm$0.99} & \textbf{67.87}{\scriptsize$\pm$\textbf{1.52}} \\ 
\midrule
LOZO & \textbf{93.69}{\scriptsize$\pm$0.61} & \textbf{67.51}{\scriptsize$\pm$2.73} & 69.64{\scriptsize$\pm$1.79} & \textbf{69.53}{\scriptsize$\pm$0.38} & 50.64{\scriptsize$\pm$11.23} & 59.30{\scriptsize$\pm$3.76} & 59.87{\scriptsize$\pm$1.50} & 79.00{\scriptsize$\pm$1.00} & 78.83{\scriptsize$\pm$0.55} & \textbf{81.65}{\scriptsize$\pm$0.19} & \textbf{29.49}{\scriptsize$\pm$2.51} & 67.20{\scriptsize$\pm$2.39} \\
\textbf{AdaLoZO} & 93.27{\scriptsize$\pm$0.07} & 66.19{\scriptsize$\pm$1.27} & \textbf{72.02}{\scriptsize$\pm$2.73} & 67.33{\scriptsize$\pm$0.45} & \textbf{56.09}{\scriptsize$\pm$2.94} & \textbf{62.07}{\scriptsize$\pm$0.87} & \textbf{60.30}{\scriptsize$\pm$0.30} & \textbf{80.67}{\scriptsize$\pm$0.58} & \textbf{79.17}{\scriptsize$\pm$0.55} & 78.03{\scriptsize$\pm$1.59} & 28.76{\scriptsize$\pm$1.37} & \textbf{67.63}{\scriptsize$\pm$\textbf{1.16}} \\ 
\midrule
DiZO & 92.93{\scriptsize$\pm$0.13} & \textbf{67.03}{\scriptsize$\pm$1.63} & \textbf{69.64}{\scriptsize$\pm$3.09} & \textbf{68.77}{\scriptsize$\pm$2.79} & \textbf{60.90}{\scriptsize$\pm$4.54} & \textbf{62.00}{\scriptsize$\pm$1.31} & 62.03{\scriptsize$\pm$1.07} & 78.33{\scriptsize$\pm$2.31} & 78.13{\scriptsize$\pm$0.21} & \textbf{80.38}{\scriptsize$\pm$1.34} & 25.93{\scriptsize$\pm$1.65} & 67.83{\scriptsize$\pm$1.83} \\
\textbf{AdaDiZO} & \textbf{93.04}{\scriptsize$\pm$0.63} & 66.79{\scriptsize$\pm$0.63} & 69.05{\scriptsize$\pm$1.03} & 68.20{\scriptsize$\pm$0.46} & 59.30{\scriptsize$\pm$3.09} & 61.02{\scriptsize$\pm$1.28} & \textbf{62.97}{\scriptsize$\pm$0.75} & \textbf{82.33}{\scriptsize$\pm$2.52} & \textbf{78.57}{\scriptsize$\pm$0.21} & 79.59{\scriptsize$\pm$1.82} & \textbf{29.24}{\scriptsize$\pm$1.45} & \textbf{68.19}{\scriptsize$\pm$\textbf{1.26}} \\ 
\midrule
HiZOO & \textbf{93.43}{\scriptsize$\pm$0.48} & \textbf{65.46}{\scriptsize$\pm$2.40} & 69.05{\scriptsize$\pm$1.03} & \textbf{68.03}{\scriptsize$\pm$1.46} & 56.41{\scriptsize$\pm$5.80} & 59.25{\scriptsize$\pm$1.10} & \textbf{65.57}{\scriptsize$\pm$2.97} & \textbf{82.00}{\scriptsize$\pm$2.00} & 78.13{\scriptsize$\pm$0.29} & \textbf{81.96}{\scriptsize$\pm$1.35} & 27.81{\scriptsize$\pm$1.18} & \textbf{67.92}{\scriptsize$\pm$1.82} \\
\textbf{AdaHiZOO} & 92.85{\scriptsize$\pm$0.52} & 64.86{\scriptsize$\pm$1.85} & \textbf{71.43}{\scriptsize$\pm$1.79} & 67.17{\scriptsize$\pm$1.27} & \textbf{56.73}{\scriptsize$\pm$5.85} & \textbf{60.24}{\scriptsize$\pm$1.31} & 64.23{\scriptsize$\pm$0.15} & 80.33{\scriptsize$\pm$0.58} & \textbf{78.97}{\scriptsize$\pm$0.74} & 80.12{\scriptsize$\pm$1.88} & \textbf{28.75}{\scriptsize$\pm$0.75} & 67.79{\scriptsize$\pm$\textbf{1.52}} \\ 
\midrule
PseuZO & \textbf{93.77}{\scriptsize$\pm$0.07} & \textbf{65.22}{\scriptsize$\pm$2.76} & \textbf{71.43}{\scriptsize$\pm$1.79} & 67.30{\scriptsize$\pm$0.62} & \textbf{56.09}{\scriptsize$\pm$11.27} & \textbf{61.55}{\scriptsize$\pm$1.19} & 59.67{\scriptsize$\pm$1.55} & 79.00{\scriptsize$\pm$1.00} & \textbf{78.47}{\scriptsize$\pm$0.50} & 78.60{\scriptsize$\pm$1.28} & 26.08{\scriptsize$\pm$2.30} & 67.02{\scriptsize$\pm$2.21} \\
\textbf{AdaPZO} & 93.00{\scriptsize$\pm$0.40} & 64.50{\scriptsize$\pm$1.37} & \textbf{71.43}{\scriptsize$\pm$0.00} & \textbf{67.67}{\scriptsize$\pm$2.14} & 55.13{\scriptsize$\pm$5.80} & 61.34{\scriptsize$\pm$0.39} & \textbf{60.23}{\scriptsize$\pm$0.50} & \textbf{80.67}{\scriptsize$\pm$1.15} & \textbf{78.47}{\scriptsize$\pm$0.64} & \textbf{78.67}{\scriptsize$\pm$0.96} & \textbf{28.57}{\scriptsize$\pm$0.90} & \textbf{67.24}{\scriptsize$\pm$\textbf{1.30}} \\ 
\bottomrule
\end{tabular}%
}
\end{table*}

\subsection{Experimental Setup}
\label{sec:exp_setup}
We evaluate \method{} on LLaMA-2-7B~\citep{touvron2023llama}, LLaMA-3.1-8B~\citep{grattafiori2024llama}, and OPT 6.7B-30B~\citep{zhang2022opt} models across 11 downstream tasks. We compare against MeZO~\citep{malladi2023fine}, LoZO~\citep{chen2025enhancing}, DiZO~\citep{tan2025harmony}, HiZOO~\citep{zhao2025secondorder}, and PseuZO~\citep{yue2025pseuzo}. Detailed hyperparameters and baselines are listed in \Cref{app:implementation}.

\subsection{Efficiency Analysis: Breaking the Linear Barrier}
\label{subsec:efficiency}

We analyze the core bottleneck in ZO optimization: the linear complexity of perturbation operations. \Cref{fig:latency_memory} presents a comparative visualization of wall-clock latency and memory usage for \method{} versus standard ZO baselines.

\textbf{Wall-Clock Acceleration.} Standard MeZO suffers from structural inefficiency, with perturbation generation and parameter updates accounting for nearly half of each training step. By restricting these operations to a sparse set of active layers, \method{} effectively removes this bottleneck. \fei{To rigorously ensure fairness, we evaluate the absolute hardware throughput on a single NVIDIA A100 (40GB) GPU in BF16 precision. For OPT-6.7B (batch size 16, sequence length 256), \method{} achieves a throughput of 7,728 tokens/sec, significantly surpassing MeZO's 4,501 tokens/sec.} This translates to a genuine $1.7\times$ hardware-level speedup, confirming that the MAB bookkeeping overhead is negligible ($\ll 1\%$ of step time). Furthermore, when applied to PseuZO, the speedup reaches almost $3.0\times$ by circumventing costly projection steps. Detailed results in \Cref{tab:latency_breakdown} show that perturbation overhead is reduced to less than $10\%$ of step time.

\textbf{Memory Neutrality.} Notably, \method{} preserves the peak memory footprint of standard ZO methods, as the orange diamonds shown in \Cref{fig:latency_memory}. In contrast to FO approaches that require substantial amounts of optimizer state, \method{} operates entirely within inference-level memory constraints, ensuring compatibility with consumer hardware. 
As shown in \Cref{tab:speedup_details}, the efficiency improvements are consistent across sequence lengths, achieving up to $5.1\times$ speedup on short-sequence tasks where perturbation overhead is most significant.

\begin{figure*}[t!]
    \centering
    \begin{subfigure}[b]{0.24\textwidth}
        \centering
        \includegraphics[width=\textwidth]{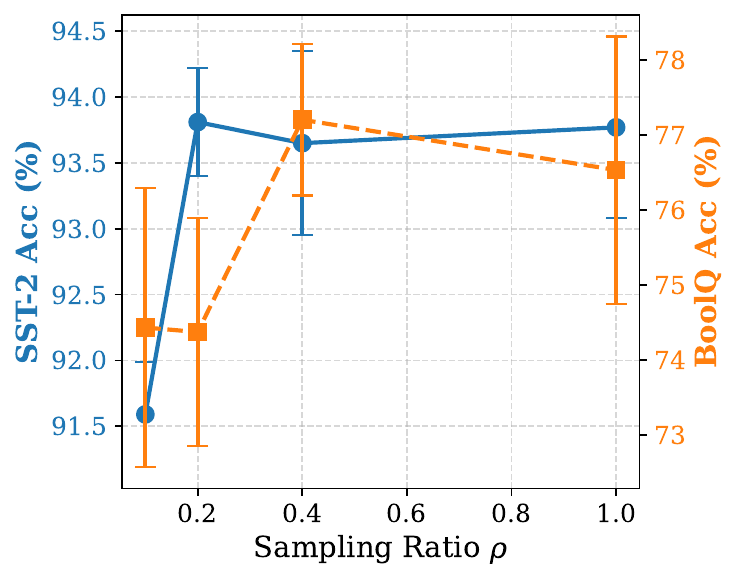}
        \caption{Effect of Sampling Ratio $\rho$}
        \label{fig:ablation_rho}
    \end{subfigure}
    \hfill
    \begin{subfigure}[b]{0.24\textwidth}
        \centering
        \includegraphics[width=\textwidth]{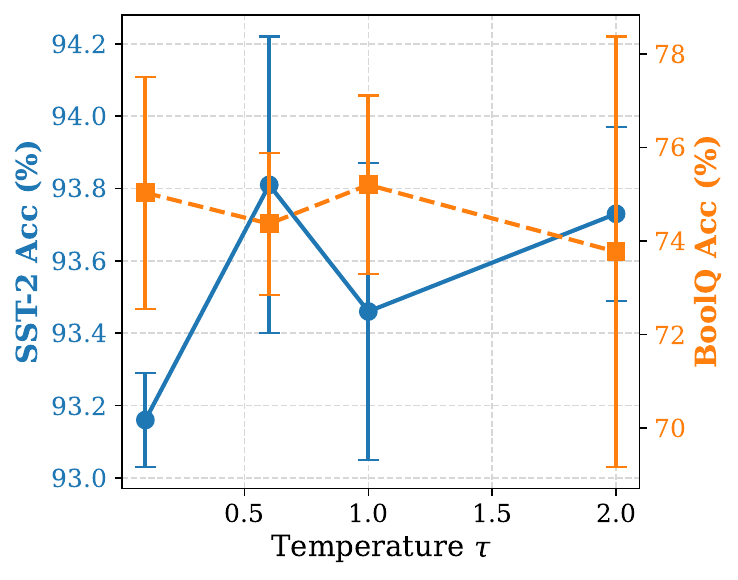}
        \caption{Effect of Temperature $\tau$}
        \label{fig:ablation_tau}
    \end{subfigure}
    \hfill
    \begin{subfigure}[b]{0.24\textwidth}
        \centering
        \includegraphics[width=\textwidth]{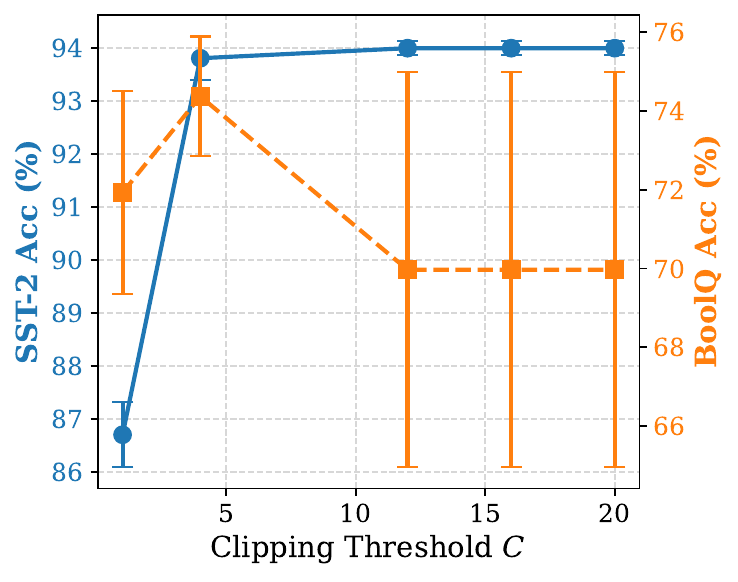}
        \caption{Effect of IPW Clipping $C$}
        \label{fig:ablation_clip}
    \end{subfigure}
    \hfill
    \begin{subfigure}[b]{0.24\textwidth}
        \centering
        \includegraphics[width=\textwidth]{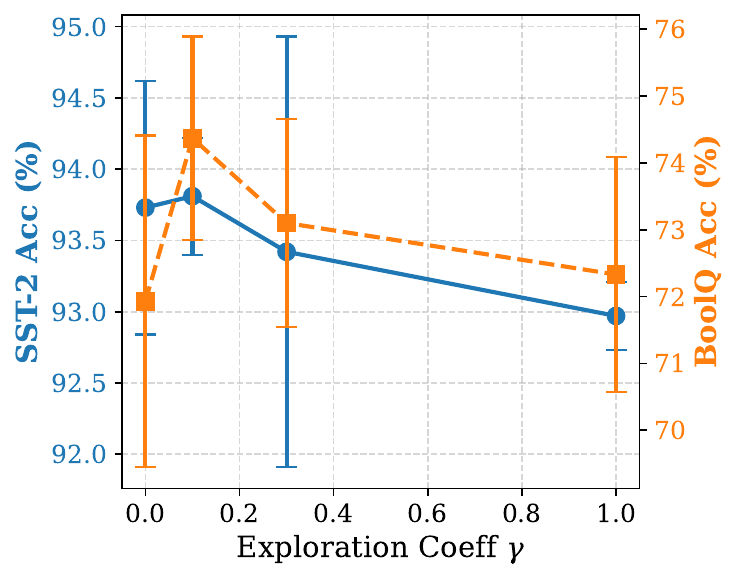}
        \caption{Effect of Exploration $\gamma$}
        \label{fig:ablation_gamma}
    \end{subfigure}
    
    \caption{\textbf{Ablation Studies.} We analyze the impact of four key hyperparameters on SST-2 (blue, left axis) and BoolQ (orange, right axis) performance. Error bars denote standard deviation across 3 seeds. Detail in \Cref{tab:ablation_full}.}
    \label{fig:ablation_all}
\end{figure*}

\subsection{Main Results on LLaMA Series}
\label{subsec:main_results}

\Cref{tab:llama_results} presents the performance of MeZO and \method{} on LLaMA-2-7B and LLaMA-3.1-8B models. \method{} demonstrates consistent superiority over the dense baseline.

\textbf{Superiority over Dense Updates.} \method{} achieves highly competitive performance against, and often surpasses, the full-parameter MeZO baseline. On challenging reasoning benchmarks such as DROP and SQuAD, \method{} delivers notable improvements, including a gain of 1.03\% on DROP with LLaMA-2. These results challenge the notion that updating more parameters necessarily leads to better outcomes, and demonstrate that adaptive sparse updates can effectively suppress the detrimental noise associated with high-dimensional dense perturbations.

\textbf{Stability and Robustness.} ZO optimization is often hindered by high variance. \method{} achieves lower standard deviations compared to MeZO, indicating greater stability. This improvement stems from the count-aware IPW estimator, which, as detailed in \Cref{sec:method}, reduces estimation variance by focusing updates on frequently sampled, high-sensitivity layers.

\subsection{Universality across ZO Optimizers}
\label{subsec:universality}

A defining feature of \method{} is its orthogonality to existing ZO enhancements. We integrate \method{} as a plug-and-play module into four representative baselines. 
As shown in \Cref{tab:multi_methods_result}, the \method{}-enhanced variants (denoted as Ada-) consistently improve or maintain the accuracy of their base methods while providing the significant speedups discussed in \Cref{subsec:efficiency}. 
For instance, AdaHiZOO not only accelerates HiZOO by $2.0\times$ but also improves average accuracy by $0.15\%$ on OPT-6.7B. This universality confirms that adaptive layer-wise sparsity captures a fundamental property of the loss landscape that is complementary to subspace constraints (LoZO) or gradient priors (HiZOO). 
Additionally, we demonstrate the scalability of \method{} on OPT-13B and OPT-30B models in \Cref{tab:multi_size_result}, where it continues to outperform MeZO, validating its effectiveness for large-scale fine-tuning.

\subsection{Ablation Study}
\label{sec:ablation}

We conduct comprehensive ablation studies with the LLaMA-2-7b model to evaluate the contributions of each \method{} component and hyperparameter sensitivity. Two representative tasks are selected: SST-2 (robust to noise) and BoolQ (sensitive to gradient estimation errors). Key results are shown in \Cref{fig:ablation_all}.

\paragraph{Necessity of Sparse Update}
\Cref{fig:ablation_all}(a) shows the effect of sampling ratio $\rho$. Performance improves markedly as $\rho$ increases from 0.1 to 0.2, suggesting that excessive sparsity may discard crucial gradient information. However, further increasing $\rho$ to 0.4 or 1.0 (full-parameter MeZO) yields diminishing or negative returns. This supports our hypothesis: \textbf{full-parameter perturbation is often suboptimal for zeroth-order optimization}. Given the ZO error bound $O(d/\sqrt{T})$, estimation variance scales with dimension $d$. By reducing the update space to $\rho d$, \method{} lowers variance and introduces implicit regularization, outperforming dense baselines. 
\fei{Furthermore, to definitively isolate the optimization benefits of our MAB-driven policy from the wall-clock speedups of raw sparsity, we demonstrate in \Cref{app:random_sparse} that \method{} consistently outperforms a uniform Random Sparse baseline (by 2.12\% on average), proving the necessity of adaptive layer selection.}

\paragraph{Variance Reduction is Critical}
\Cref{fig:ablation_all}(c) highlights the sensitivity to the IPW clipping threshold $C$. At $C=1$ (unweighted sparse updates), performance drops due to sampling bias. Conversely, a large $C$ ($\ge 12$) increases estimator variance, causing accuracy on BoolQ to collapse (from 74.37\% to 69.97\%), as extreme weights $1/(K p_l)$ amplify gradient variance. Setting $C=4$ achieves an optimal balance between \textbf{bias} and \textbf{variance}, which is crucial for high-dimensional ZO optimization.

\paragraph{Balancing Exploration and Exploitation}
\Cref{fig:ablation_all}(b) and (d) analyze the bandit strategy. A small temperature ($\tau=0.1$, nearly greedy) leads to premature convergence, while a large $\tau$ ($=2.0$, nearly uniform) fails to exploit gradient sensitivity; both underperform compared to $\tau=0.6$. The exploration coefficient $\gamma$ is also essential: adding slight uniform exploration ($\gamma=0.1$) boosts BoolQ performance by 2.4\% over pure Softmax ($\gamma=0$), confirming that preventing ``layer starvation'' is vital for estimator coverage and convergence in non-stationary bandit settings.

\section{Conclusion}
\label{sec:conclusion}
In this work, we identify the linear computational cost of perturbation and parameter updates as a fundamental bottleneck in zeroth-order (ZO) large language model (LLM) fine-tuning. To address this limitation, we introduce \method{}, a novel framework that integrates multi-armed bandit (MAB)-driven adaptive layer selection with a count-aware inverse probability weighting (IPW) estimator, enabling efficient, sparse, and unbiased gradient estimation. By dynamically concentrating perturbations on the most sensitive layers, \method{} effectively mitigates the curse of dimensionality and offers theoretical guarantees for convergence.
Extensive experiments on both LLaMA and OPT models demonstrate that \method{} achieves $1.7\times$ to $5.1\times$ wall-clock speedup over state-of-the-art baselines, without any loss in accuracy. Furthermore, the plug-and-play design of \method{} allows for seamless integration with existing ZO optimizers, providing a unified and scalable solution for memory-efficient and rapid LLM fine-tuning.
\section*{Limitations}
\label{sec:limitations}

Although \method{} delivers substantial improvements in wall-clock efficiency and convergence stability, several inherent limitations of the Zeroth-Order optimization paradigm remain.

\paragraph{Performance Gap with First-Order Methods.}
\method{} consistently surpasses standard zeroth-order baselines such as MeZO and narrows the gap with full fine-tuning. However, its performance does not fully reach that of first-order methods across all tasks. For example, in challenging reasoning benchmarks such as DROP, the stochastic nature of gradient estimation, even when enhanced by variance reduction techniques, results in a precision trade-off relative to backpropagation. Addressing this gap continues to be a fundamental challenge for the zeroth-order optimization community.

\paragraph{Scope of Application.}
The present evaluation is limited to supervised fine-tuning of large language models within the field of natural language processing. The effectiveness of adaptive layer-wise sampling for other areas, including multimodal large language models and reinforcement learning from human feedback, has not yet been empirically validated. Investigating these extensions represents an important direction for future research.

\section*{Acknowledgements}
This work was supported by the National Key R\&D Projects under Grant 2024YFC3307100; the National Natural Science Foundation of China under Grants 62076101, 62576364, and 62306118; the Guangdong Basic and Applied Basic Research Foundation under Grants 2024B1515020082, 2023A1515010007, 2026B1515020071, and 2026A1515010725; the Guangdong Provincial Key Laboratory of Human Digital Twin under Grant 2022B1212010004; the Shenzhen Basic Research Project (Natural Science Foundation) Basic Research Key Project under Grant JCYJ20241202124430041; the Fundamental Research Funds for the Central Universities under Grant 2025ZYGXZR054; the TCL Young Scholars Program; and the 2024 Tencent AI Lab Rhino-Bird Focused Research Program.


\bibliography{custom}

\newpage
\appendix
\crefalias{section}{appendix}
\begin{table}[ht]
\centering
\caption{\textbf{Hyperparameter Settings.} We list the common settings shared across all ZO methods and the specific hyperparameters for each baseline and \method{}.}
\label{tab:hyperparams}
\resizebox{\columnwidth}{!}{%
\begin{tabular}{llc}
\toprule
\textbf{Category} & \textbf{Hyperparameter} & \textbf{Value} \\ \midrule
\multicolumn{3}{c}{\textit{\textbf{Common Settings for All ZO Methods}}} \\ \cmidrule(lr){1-3}
\multirow{3}{*}{General} & Batch Size & 16 \\
 & Learning Rate & $\{1, 5, 10\} \times 10^{-7}$ \\
 & Perturbation Scale $\mu$ & $1\times 10^{-3}$ \\ \midrule
\multicolumn{3}{c}{\textit{\textbf{Method-Specific Settings}}} \\ \cmidrule(lr){1-3}
\textbf{LOZO} & Rank $r$ & 2 \\
 & Interval $v$ & 50 \\ \addlinespace
\textbf{HiZOO} & Estimate times $n$ & 1 \\
 & Smooth scale $\alpha$ & $1 \times 10^{-8}$ \\ \addlinespace
\textbf{DiZO} & Projection Update Cycle & 100 \\
 & Projection Iterations & 10 \\
 & Smoothing Scalar & 0.1 \\
 & Projection Step Size & 2 \\
 & Clip Range & 0.2 \\ \addlinespace
\textbf{PseuZO} & Sliding Window $L$ & 14 \\
 & Cycles / Epochs & 2 / 10 \\
 & Descent Formula & $\lambda(t) = \frac{\lambda_{\max}}{1 + 10 t}$ \\ \midrule
\multirow{5}{*}{\textbf{AdaLeZO} (Ours)} & Sampling Ratio $\rho$ & 0.2 \\
 & Temperature $\tau$ & 0.6 \\
 & Exploration Coeff. $\gamma$ & $\{0, 0.1\}$ \\
 & EMA Factor $\alpha$ & 0.1 \\
 & Clipping Threshold $C$ & $\{4, 16\}$ \\ \midrule
\multicolumn{3}{c}{\textit{\textbf{Fine-Tuning (Reference)}}} \\ \cmidrule(lr){1-3}
\multirow{3}{*}{FT (Adam)} & Batch Size & 8 \\
 & Learning Rate & $1 \times 10^{-5}$ \\
 & Scheduler & Linear \\ \bottomrule
\end{tabular}%
}
\end{table}

\section{Experiments Detial}
\label{app:implementation}
\subsection{Baseline}
We conducted comparative evaluations of \method{} against three baseline approaches: zero-shot, in-context learning~(ICL), fine-tuning~(FT), and MeZO~\citep{malladi2023fine}. The zero-shot approach assesses both pre-trained models without any fine-tuning, serving as a lower-bound performance baseline. Fine-tuning (FT) is referenced to indicate the performance of non-quantized models. To further investigate the generalizability of \method{}, we also compared it with four state-of-the-art MeZO-based zero-order optimization methods: LoZO~\citep{chen2025enhancing}, DiZO~\citep{tan2025harmony}, HiZOO~\citep{zhao2025secondorder}, and PseuZO~\citep{yue2025pseuzo}.

\subsection{Models, Dataset, and Metrics} 
Our experiments encompass two prominent model families: the OPT~\citep{zhang2022opt} family and the LLaMA~\citep{touvron2023llama,grattafiori2024llama} family. To ensure comprehensive coverage of model scales, we selected four models from the OPT~\citep{zhang2022opt} family: OPT-6.7B, OPT-13B, and OPT-30B. For contemporary relevance, we included LLaMA-2-7B~\citep{touvron2023llama} and LLaMA-3.1-8B~\citep{grattafiori2024llama} from the LLaMA family. 

For downstream evaluation, we employed eleven tasks commonly used in ZO optimization literature. These include seven classification tasks from the SuperGLUE~\citep{wallach2019superglue} benchmark: SST-2~\citep{socher2013recursive}, RTE~\citep{bentivogli2009fifth}, CB~\citep{de2019commitmentbank}, BoolQ~\citep{zhang2018record}, WIC~\citep{pilehvar2019wic}, WSC~\citep{levesque2012winograd}, and MultiRC~\citep{khashabi2018looking}; two multiple-choice tasks from SuperGLUE benchmark: Copa~\citep{roemmele2011choice} and ReCoRD~\citep{zhang2018record}; and two question-answering tasks: SQuAD~\citep{rajpurkar2016squad} and DROP~\citep{dua2019drop}, which we treat as generation tasks. For classification and multiple-choice tasks, we report accuracy; for generation tasks, we report the F1 score.

\subsection{Implementation}
For all ZO methods, we ensured fair comparison by adopting the optimal hyperparameter settings from their original publications and conducting a grid search over learning rates. 
Detailed hyperparameter configurations are provided in \Cref{tab:hyperparams}. For \method{}, we used the default parameters specified in \cref{tab:hyperparams} across all experiments, except in ablation studies where only the parameters under investigation were modified. \method{} employs the same learning rate as the baseline for comparison.

During training, all models were fine-tuned for 20,000 steps with checkpoints saved every 5,000 steps. We selected the checkpoint with the lowest validation loss for final evaluation on the test set. For each task, we used 1,000 samples for training and 500 for validation; when fewer than 1,000 samples were available, we used 100 for validation and the remainder for training. All available test samples were used for evaluation. To ensure statistical reliability, we repeated each experiment with three different random seeds and report the average performance metric~\citep{rao2022reproducibility}.

\begin{table}[t]
\centering
\caption{\textbf{Ablation Study on LLaMA-2-7b.} We report the average accuracy and standard deviation across 3 seeds. Default parameters are \underline{underlined}. \textbf{Bold} indicates the best performance. This table validates the contribution of each component in our Bandit-based strategy.}
\label{tab:ablation_full}
\resizebox{\columnwidth}{!}{%
\begin{tabular}{l|cc|cc}
\toprule
\textbf{Ablation Component} & \multicolumn{2}{c|}{\textbf{SST-2}} & \multicolumn{2}{c}{\textbf{BoolQ}} \\
(Parameter) & \textbf{Avg.} & \textbf{Std.} & \textbf{Avg.} & \textbf{Std.} \\ 
\midrule
\multicolumn{5}{l}{\textit{Sampling Ratio $\rho$}} \\
0.1 & 91.59 & 0.40 & 74.43 & 1.86 \\
\underline{0.2} & \textbf{93.81} & 0.41 & 74.37 & 1.52 \\
0.4 & 93.65 & 0.70 & \textbf{77.20} & 1.01 \\
1.0 (Full) & 93.77 & 0.69 & 76.53 & 1.78 \\
\midrule
\multicolumn{5}{l}{\textit{Temperature $\tau$}} \\
0.1 (Greedy) & 93.16 & 0.13 & \textbf{75.03} & 2.48 \\
\underline{0.6} & \textbf{93.81} & 0.41 & 74.37 & \textbf{1.52} \\
1.0 & 93.46 & 0.41 & 75.20 & 1.91 \\
2.0 (Uniform) & 93.73 & 0.24 & 73.77 & 4.60 \\
\midrule
\multicolumn{5}{l}{\textit{Clipping Threshold $C$}} \\
1 & 86.70 & 0.61 & 71.93 & 2.57 \\
\underline{4} & 93.81 & 0.41 & \textbf{74.37} & \textbf{1.52} \\
12 & \textbf{94.00} & 0.13 & 69.97 & 5.01 \\
16 & \textbf{94.00} & 0.13 & 69.97 & 5.01 \\
\midrule
\multicolumn{5}{l}{\textit{Exploration $\gamma$}} \\
0.0 (Pure Softmax) & 93.73 & 0.89 & 71.93 & 2.48 \\
\underline{0.1} & \textbf{93.81} & 0.41 & \textbf{74.37} & \textbf{1.52} \\
0.3 & 93.42 & 1.51 & 73.10 & 1.56 \\
1.0 (Uniform) & 92.97 & 0.24 & 72.33 & 1.76 \\
\midrule
\multicolumn{5}{l}{\textit{EMA Factor $\alpha$}} \\
\underline{0.1} & \textbf{93.81} & 0.41 & 74.37 & \textbf{1.52} \\
0.5 & 92.85 & 0.59 & \textbf{75.10} & 3.32 \\
\bottomrule
\end{tabular}%
}
\end{table} 
\section{Additional Ablation Studies}
\label{app:ablation_details}

In this section, we provide a granular analysis of the hyperparameters governing the Multi-Armed Bandit (MAB) mechanism in \method{}, specifically the Reward EMA factor $\alpha$, Temperature $\tau$, and Exploration coefficient $\gamma$. These parameters are critical for balancing the trade-off between plasticity and stability in the non-stationary optimization landscape of LLM fine-tuning. The full numerical comparisons are detailed in \Cref{tab:ablation_full}.

\subsection{Sensitivity to Bandit Hyperparameters}

\paragraph{Reward EMA Factor ($\alpha$).}
The EMA factor $\alpha$ controls how quickly the bandit forgets historical rewards. A larger $\alpha$ makes the policy more responsive to recent gradient estimates but also more susceptible to instantaneous noise.
Comparing the default $\alpha=0.1$ with a more aggressive update rate $\alpha=0.5$, we observe in \Cref{tab:ablation_full} that while $\alpha=0.5$ achieves comparable average accuracy, it introduces significant instability. Specifically, on the BoolQ dataset, the standard deviation for $\alpha=0.5$ is more than double that of $\alpha=0.1$ ($3.32$ vs. $1.52$). This result corroborates that in Zeroth-Order optimization, where gradient estimates naturally possess high variance, a lower $\alpha$ is preferable as it effectively smooths out the noise, providing a stable signal for layer importance.

\paragraph{Temperature ($\tau$).}
The temperature parameter $\tau$ modulates the sharpness of the Softmax distribution derived from the estimated Q-values. 
Our results indicate that extreme values are detrimental to performance. A low temperature ($\tau=0.1$) approximates a greedy strategy, which risks premature convergence to suboptimal layers and leads to high performance variance (Std $2.48$ on BoolQ). Conversely, a high temperature ($\tau=2.0$) approaches uniform sampling, diluting the benefits of adaptive selection. The choice of $\tau=0.6$ strikes an optimal balance, allowing the model to exploit sensitive layers while maintaining sufficient entropy in the sampling distribution.

\paragraph{Exploration Coefficient ($\gamma$).}
We further verify the necessity of the explicit exploration term $\gamma/L$ in \Cref{eq:prob}. The empirical results demonstrate that a small mixing coefficient $\gamma=0.1$ consistently outperforms the pure Softmax strategy ($\gamma=0$), particularly on challenging tasks. This confirms that ensuring a non-zero lower bound on sampling probabilities is crucial for preventing ``layer starvation'', which is a scenario where potentially useful layers are permanently ignored due to early stochastic fluctuations, thereby maintaining the asymptotic validity of the estimator.
\textbf{When $\gamma=1$, the method degenerates to random search, resulting in a significant performance decline on the SST-2 and BoolQ benchmarks. This observation indirectly corroborates the effectiveness of the proposed Multi-Armed Bandit strategy.
}

\fei{\section{Impact of the Multi-Armed Bandit Policy versus Random Sparsity}
\label{app:random_sparse}

To empirically validate the necessity of the proposed MAB-driven adaptive layer selection, we must isolate the optimization benefits of our policy from the wall-clock speedups provided by raw sparsity. To this end, we introduce a \textbf{Random Sparse} baseline that selects layers uniformly at random at each step, strictly utilizing the same sparsity budget as \method{} ($\rho=0.2$). 

We evaluate this baseline on the LLaMA-2-7B model across a 9-task subset of our evaluation suite. The results are summarized in \Cref{tab:random_sparse}.

\begin{table*}[ht]
\centering
\caption{\fei{Comparison between \method{} and the Random Sparse baseline on LLaMA-2-7B. Both methods operate under identical sparsity budgets ($\rho=0.2$). The MAB policy yields a +2.12\% average improvement.}}
\label{tab:random_sparse}
\resizebox{\textwidth}{!}{
\begin{tabular}{lcccccccccc}
\toprule
\textbf{Method} & \textbf{SST-2} & \textbf{RTE} & \textbf{BoolQ} & \textbf{WSC} & \textbf{WiC} & \textbf{MultiRC} & \textbf{SQuAD} & \textbf{ReCoRD} & \textbf{AVG.} \\
\midrule
Random Sparse & 92.35 & 64.62 & 73.57 & 61.33 & 58.97 & 61.43 & 87.88 & 80.73 & 72.22 \\
\method{} & \textbf{94.00} & \textbf{66.19} & \textbf{76.30} & \textbf{62.51} & \textbf{60.20} & \textbf{70.43} & \textbf{88.49} & \textbf{81.40} & \textbf{74.34} \\
\midrule
\textit{Improvement} & \textit{+1.65} & \textit{+1.57} & \textit{+2.73} & \textit{+1.18} & \textit{+1.23} & \textit{+9.00} & \textit{+0.61} & \textit{+0.67} & \textit{\textbf{+2.12}} \\
\bottomrule
\end{tabular}
}
\end{table*}

As shown in \Cref{tab:random_sparse}, \method{} consistently outperforms uniform random selection by 2.12\% on average. The performance gap is particularly pronounced on complex reasoning and comprehension tasks, with significant gains of up to 9.00\% on MultiRC and 2.73\% on BoolQ. 

This rigorously confirms that the MAB policy is strictly necessary. While raw sparsity alone provides the computational acceleration, uniformly discarding layers leads to a severe loss of critical gradient signals. \method{} effectively identifies and focuses updates on the most sensitive parameters, preventing degradation and maintaining robust accuracy under high sparsity constraints.}

\begin{figure}[htbp]
    \centering
    \includegraphics[width=\columnwidth]{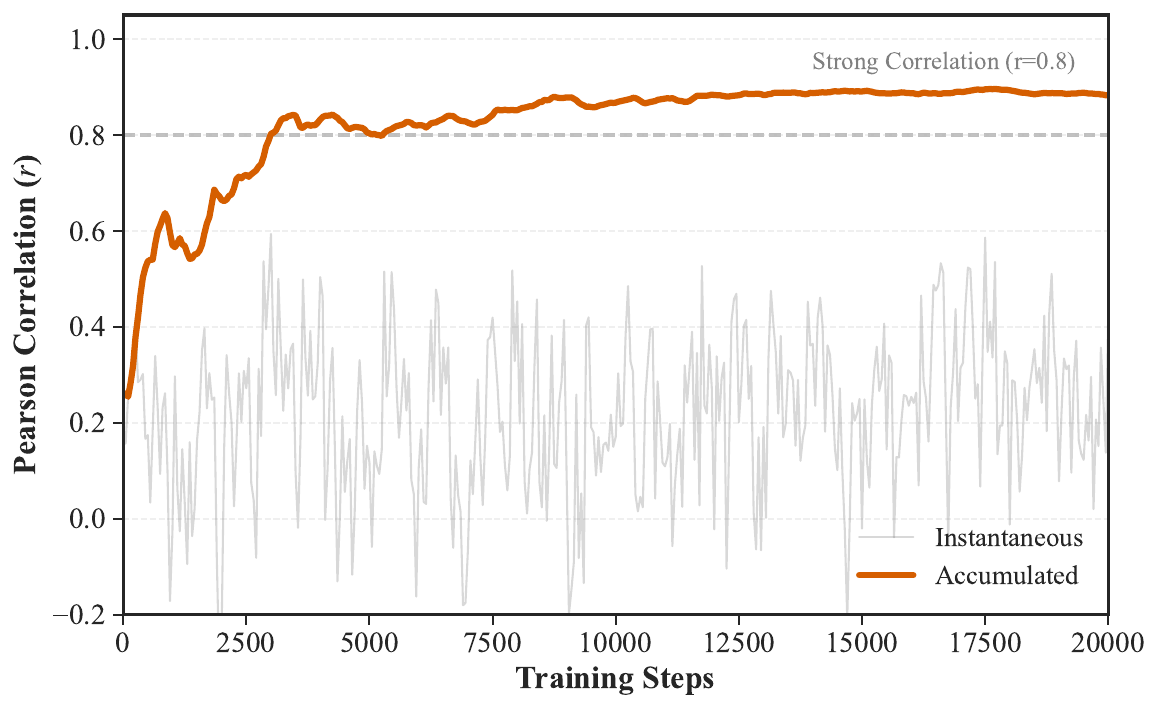}
    \caption{
    The Pearson correlation between the layer sampling probabilities assigned by \method{} and the gradient norms computed by Adam. Instantaneous correlation exhibits substantial fluctuations, reflecting the high variance inherent in  ZO estimates. In contrast, accumulated statistics converge steadily to $r \approx 0.88$, demonstrating that \method{} effectively recovers true layer sensitivity through temporal aggregation.
    }
    \label{fig:learning_curve}
\end{figure} 
\begin{figure}[htbp]
    \centering
    \includegraphics[width=\columnwidth]{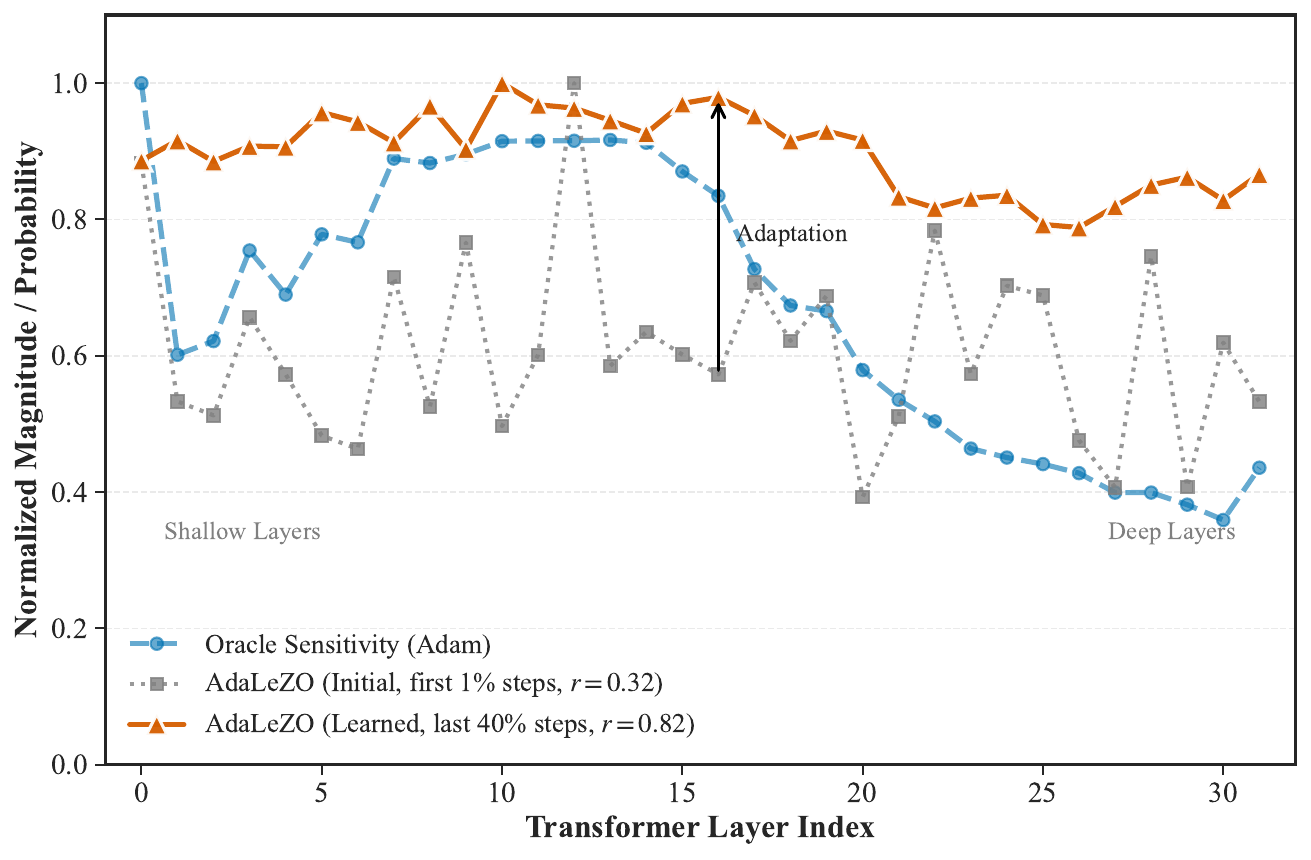}
    \caption{
    Layer-wise sensitivity alignment on OPT-6.7b (SST-2). We compare the ground truth sensitivity profile (derived from Adam's accumulated gradient norms) with \method{}'s learned sampling probabilities. While the initial policy (Gray, $r=0.32$) is nearly random, \method{} autonomously converges to a policy (Orange, $r=0.82$) that strongly correlates with the oracle sensitivity, demonstrating its ability to identify important layers without first-order gradients.
    }
    \label{fig:evolution_proof}
\end{figure}
\section{Mechanism Analysis}
\label{subsec:analysis}

\paragraph{Analyzing the Structural Learning Capability of \method{}.} 
A central question arises: \textit{Given only noisy Zeroth-Order (ZO) scalar feedback, does \method{} genuinely ``learn'' the hierarchical structure of the model, or is it merely performing a random walk?}

To address this, we tracked the Pearson correlation coefficient $r$ between the layer sampling probability distribution $\pi_t$ of \method{} and the Oracle (gradient norms calculated by Adam). As shown in \Cref{fig:learning_curve}, we observed a significant \textbf{``Temporal Denoising''} phenomenon:

\textbf{Stochasticity of Instantaneous Estimates:} The gray curve illustrates that the single-step instantaneous correlation exhibits severe oscillations ($r \in [-0.1, 0.6]$). This observation aligns with the theoretical properties of zeroth-order optimization. Specifically, since probing occurs along a single random direction $z$ at each step, the individual gradient estimate $\hat{g}$ suffers from extremely high variance, causing the algorithm to appear engaged in disordered exploration over short time scales.
    
\textbf{Robustness of Accumulated Signals:} Conversely, examining the cumulative mean of the sampling probabilities, denoted as $\bar{\pi}_T = \frac{1}{T}\sum_{t=1}^T \pi_t$ (solid orange line), reveals a distinct trend. The correlation steadily ascends from an initial value of $0.3$, ultimately reaching $0.88$ upon convergence.

Furthermore, in \Cref{fig:evolution_proof}, we visualize the cumulative probability distribution curves of the \method{} layer sampling at both the initial and final stages of training. It is clearly observable that during the early stage, the activation probability distribution across layers exhibits high variance, resembling random noise, and diverges significantly from the true gradient distribution. In contrast, during the late stage, the disparities in activation probabilities across layers diminish, resulting in a smoother curve. Concurrently, the correlation with the true gradient increases from $r=0.32$ initially to $r=0.82$. This demonstrates that \method{} effectively achieves denoising and approximates the true gradient distribution.

\textbf{Conclusion:} These results provide compelling evidence that \method{} functions as a \textbf{Temporal Filter}. While individual time steps are dominated by stochastic noise, the true gradient signal maintains consistency across the temporal dimension. Through continuous updates via the Bandit mechanism, \method{} successfully integrates low-frequency structural signals from high-frequency noise, thereby accurately reconstructing the model's intrinsic parameter sensitivity distribution without computing backpropagation.

\begin{figure*}[t!]
    \centering
    \begin{subfigure}[b]{0.24\textwidth}
        \centering
        \includegraphics[width=\textwidth]{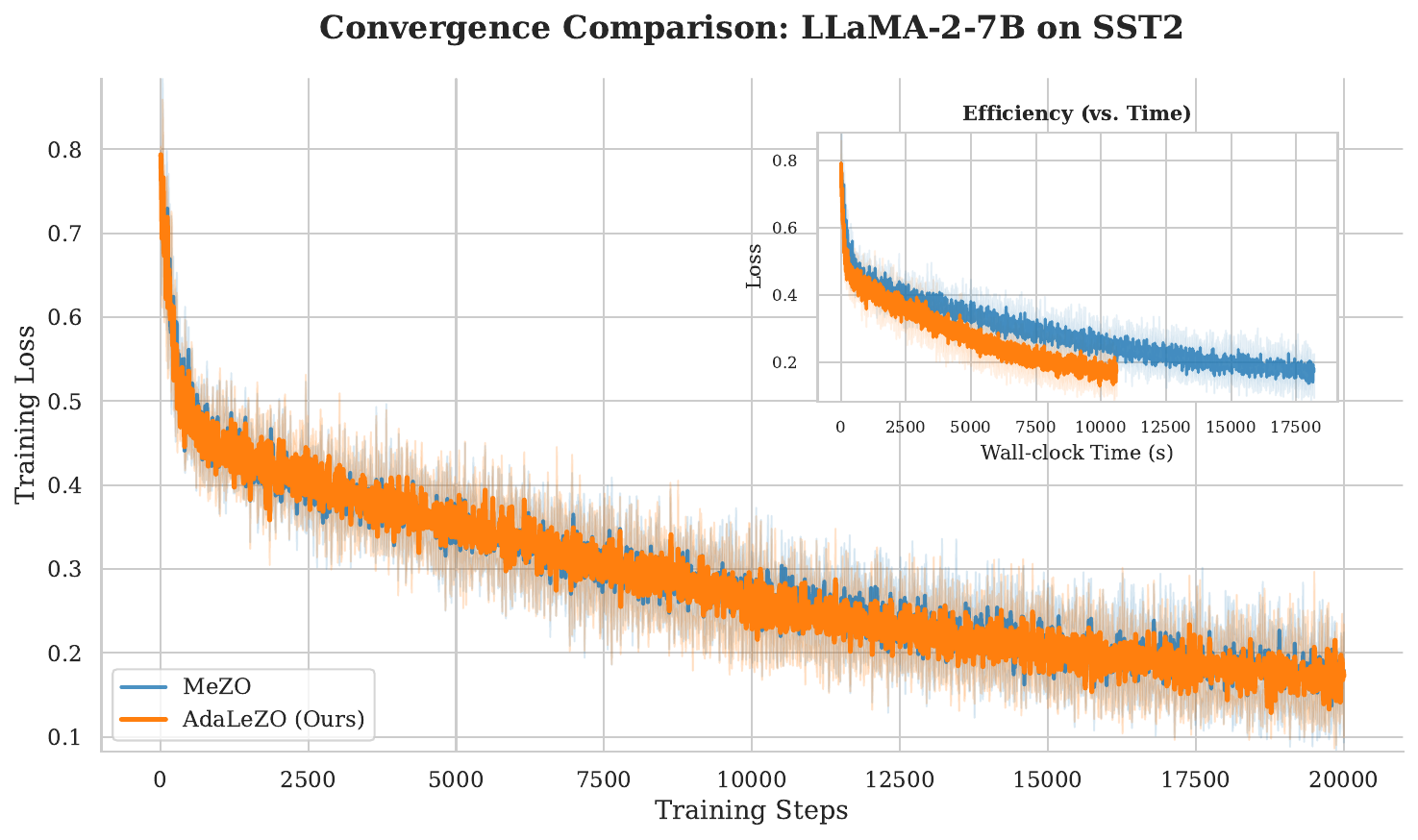}
        \caption{LLaMA-2-7B \& SST2}
        \label{fig:llama2_sst2}
    \end{subfigure}
    \hfill
    \begin{subfigure}[b]{0.24\textwidth}
        \centering
        \includegraphics[width=\textwidth]{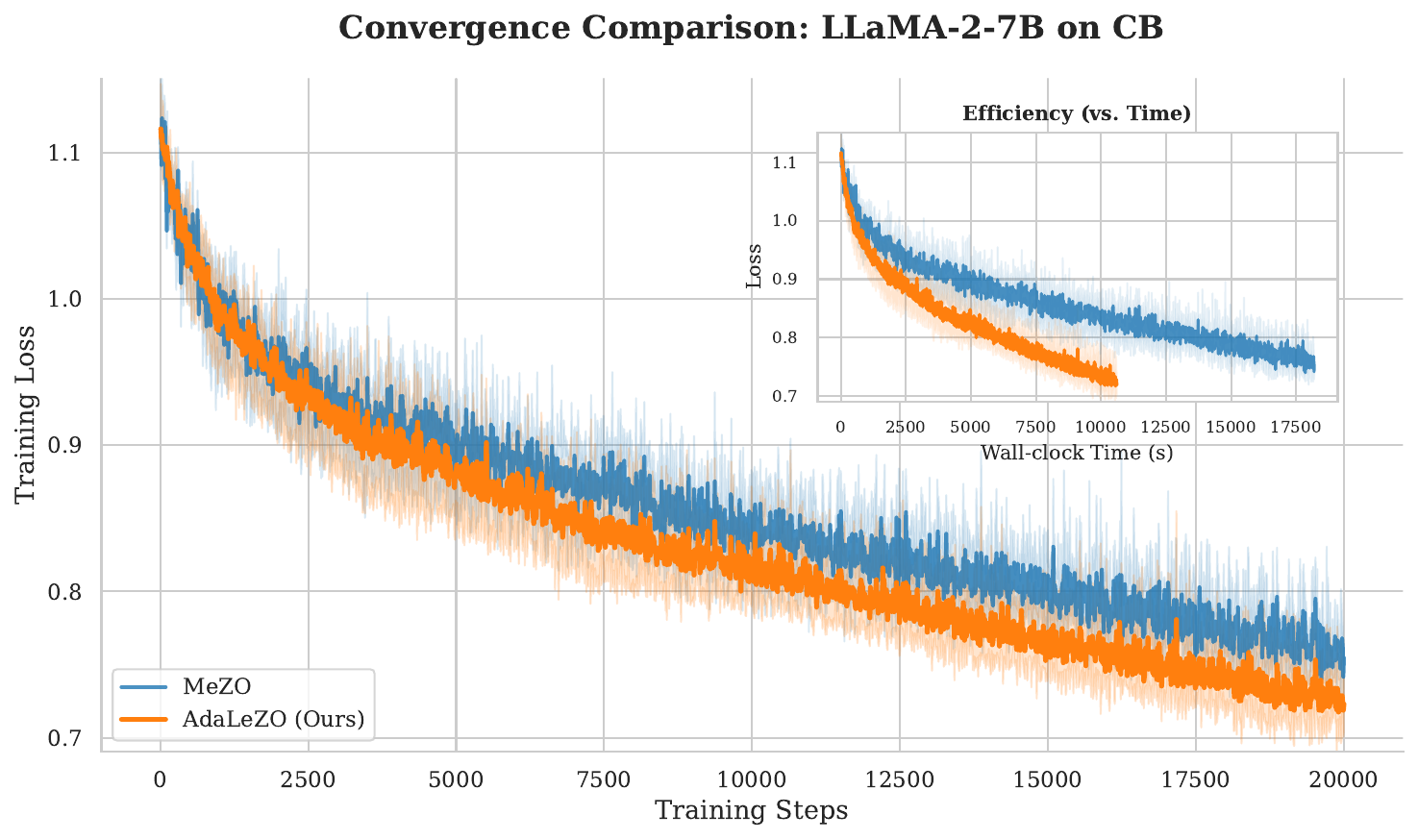}
        \caption{LLaMA-2-7B \& CB}
        \label{fig:llama2_cb}
    \end{subfigure}
    \hfill
    \begin{subfigure}[b]{0.24\textwidth}
        \centering
        \includegraphics[width=\textwidth]{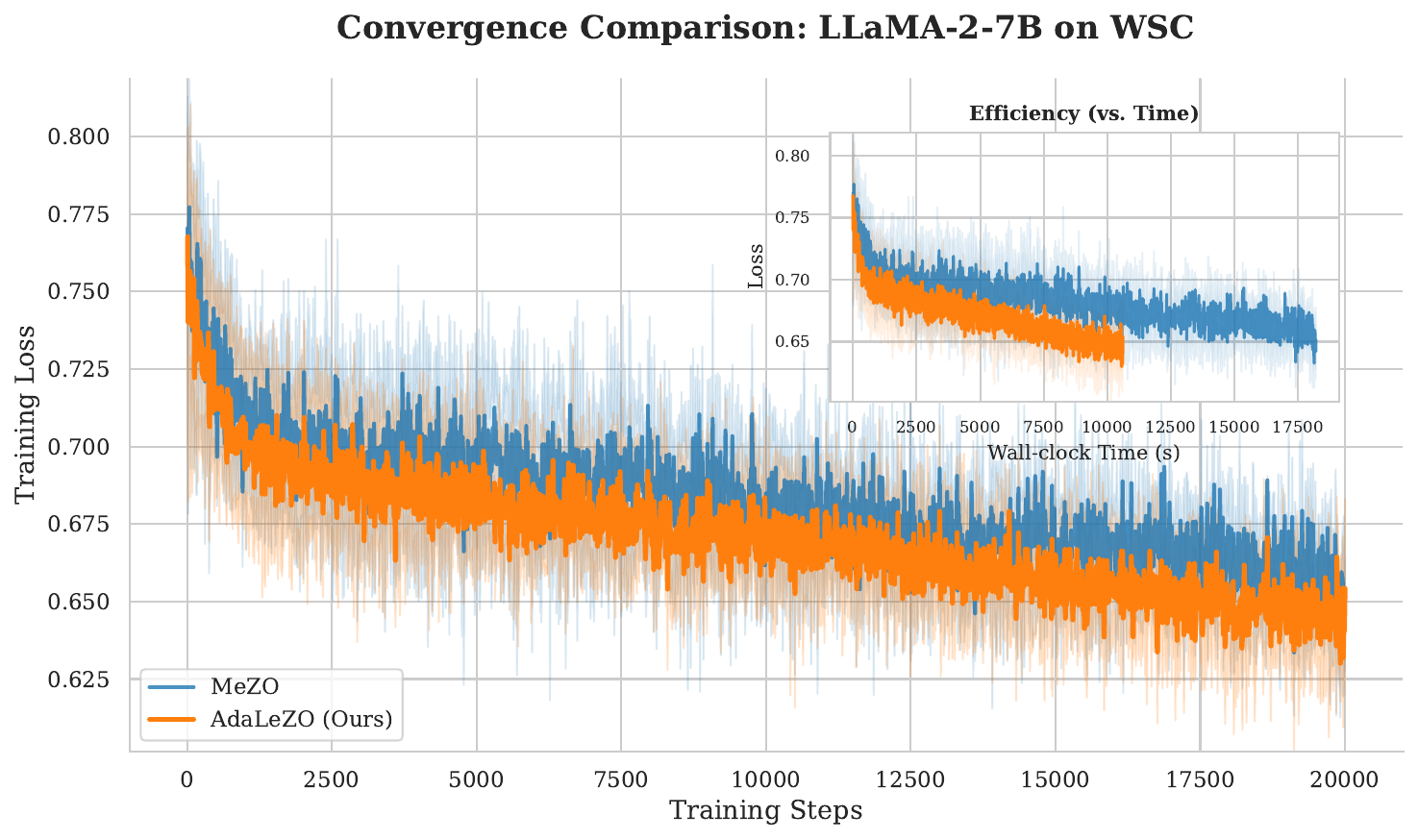}
        \caption{LLaMA-2-7B \& WSC}
        \label{fig:llama2_wsc}
    \end{subfigure}
    \hfill
    \begin{subfigure}[b]{0.24\textwidth}
        \centering
        \includegraphics[width=\textwidth]{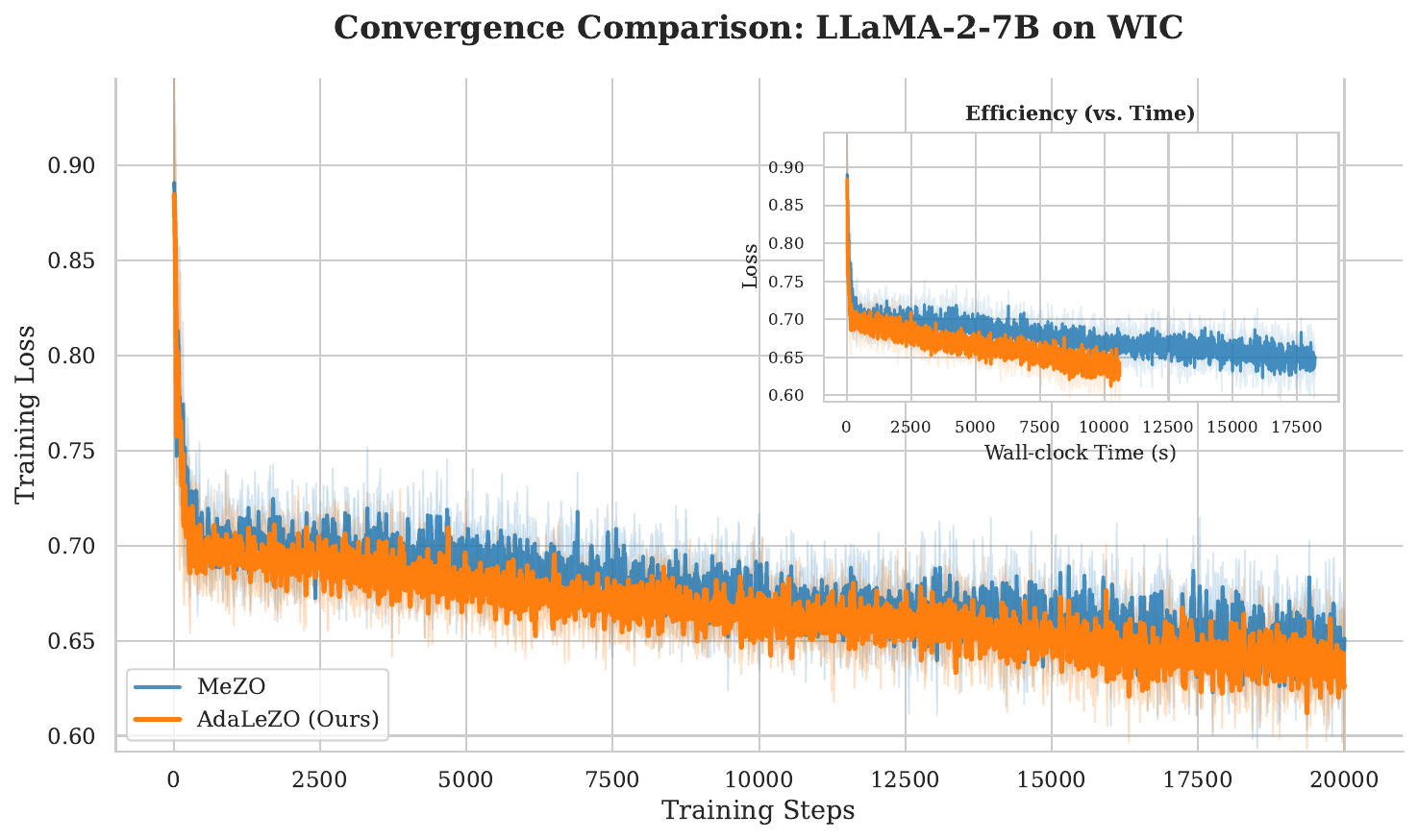}
        \caption{LLaMA-2-7B \& WIC}
        \label{fig:llama2_wic}
    \end{subfigure}

    \begin{subfigure}[b]{0.24\textwidth}
        \centering
        \includegraphics[width=\textwidth]{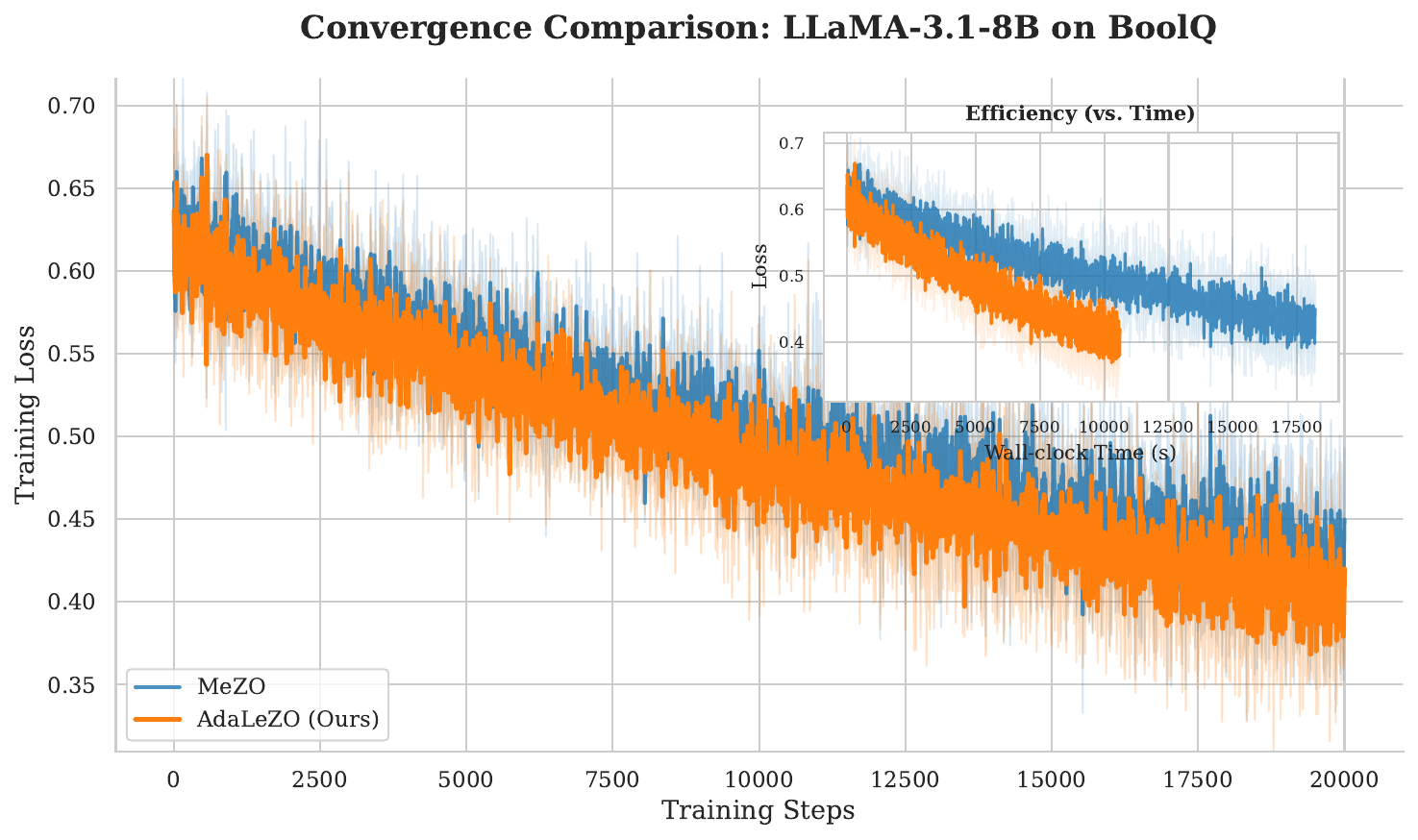}
        \caption{LLaMA-3.1-8B \& BoolQ}
        \label{fig:llama3_1_sst2}
    \end{subfigure}
    \hfill
    \begin{subfigure}[b]{0.24\textwidth}
        \centering
        \includegraphics[width=\textwidth]{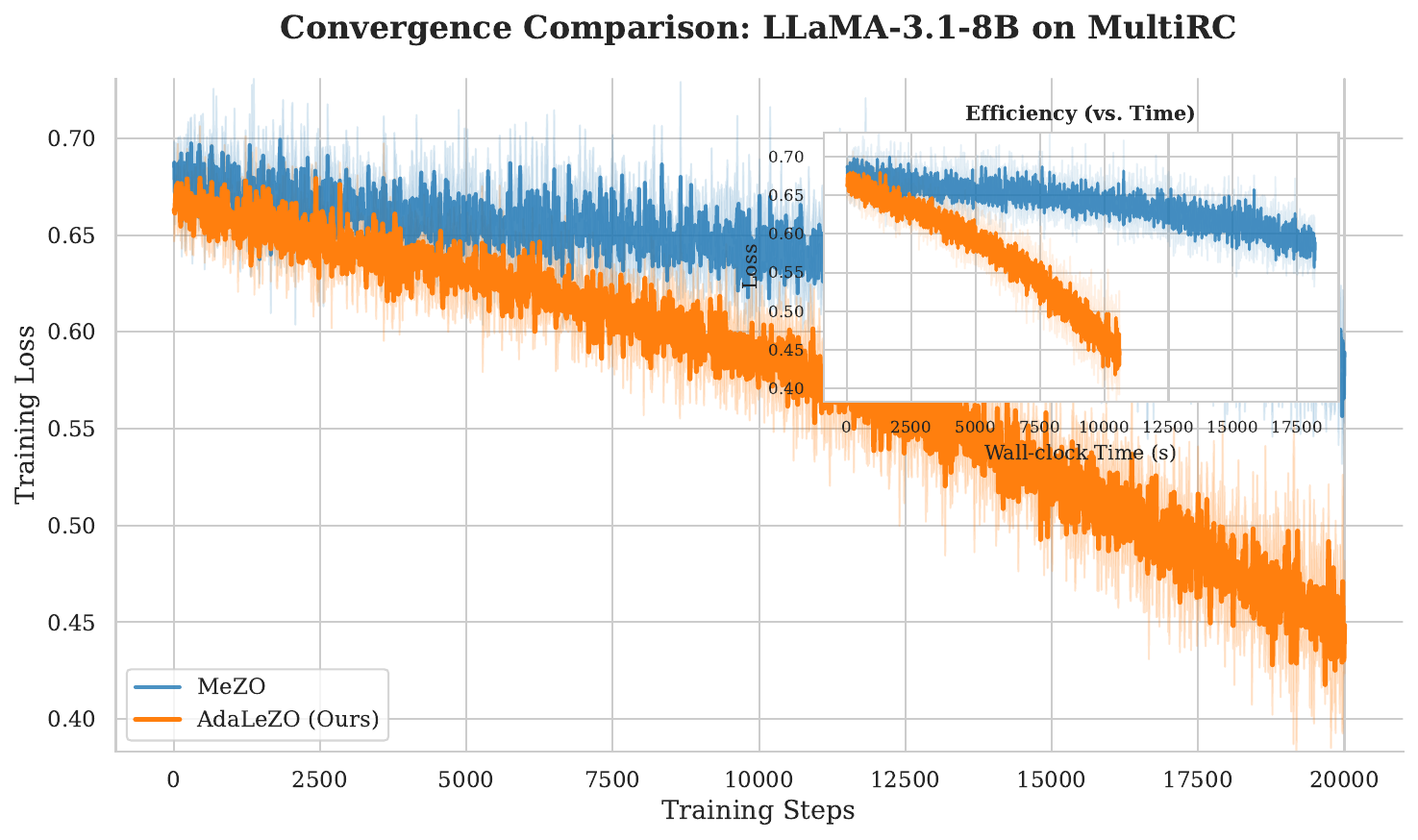}
        \caption{LLaMA-3.1-8B \& MultiRC}
        \label{fig:llama3_1_cb}
    \end{subfigure}
    \hfill
    \begin{subfigure}[b]{0.24\textwidth}
        \centering
        \includegraphics[width=\textwidth]{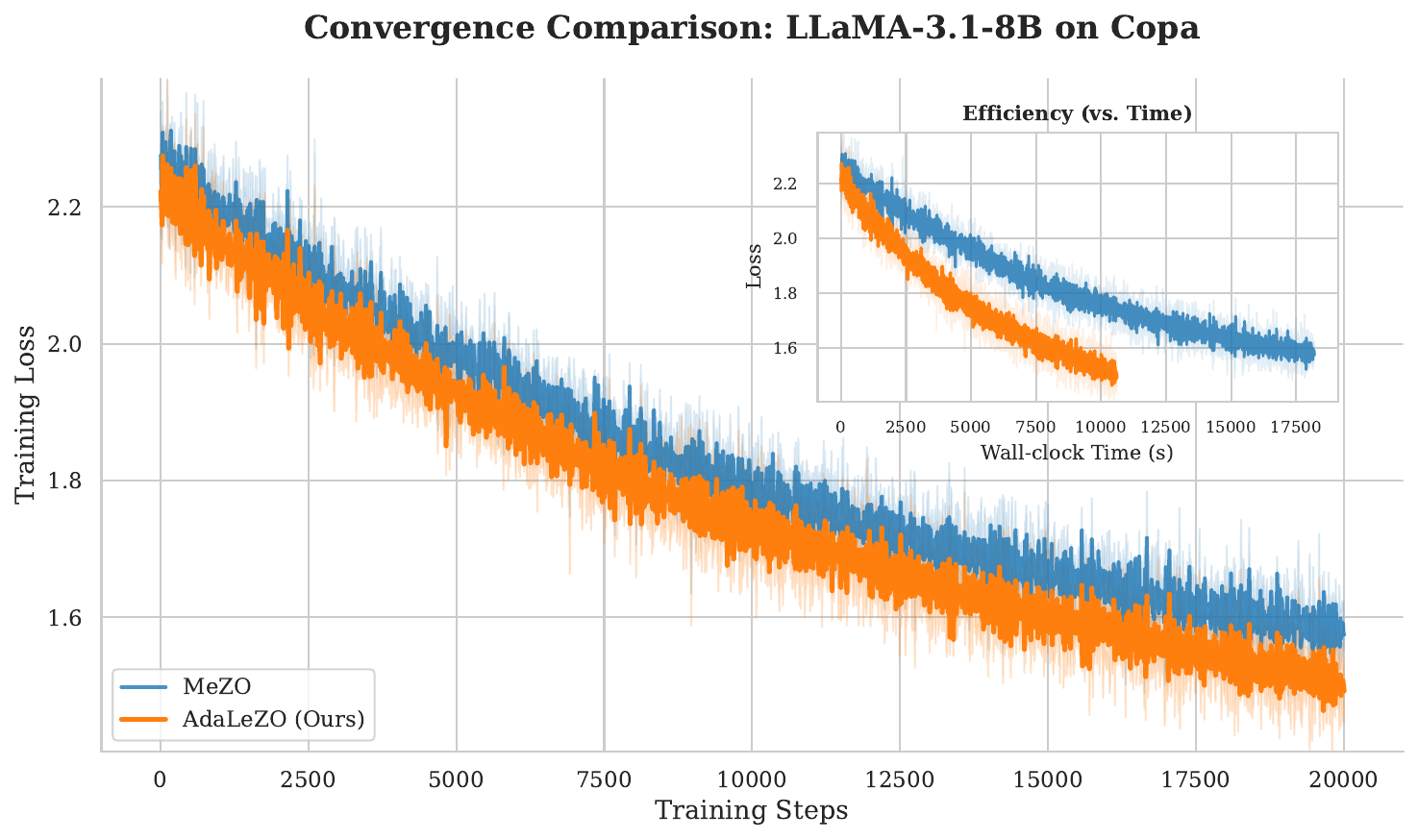}
        \caption{LLaMA-3.1-8B \& Copa}
        \label{fig:llama3_1_wsc}
    \end{subfigure}
    \hfill
    \begin{subfigure}[b]{0.24\textwidth}
        \centering
        \includegraphics[width=\textwidth]{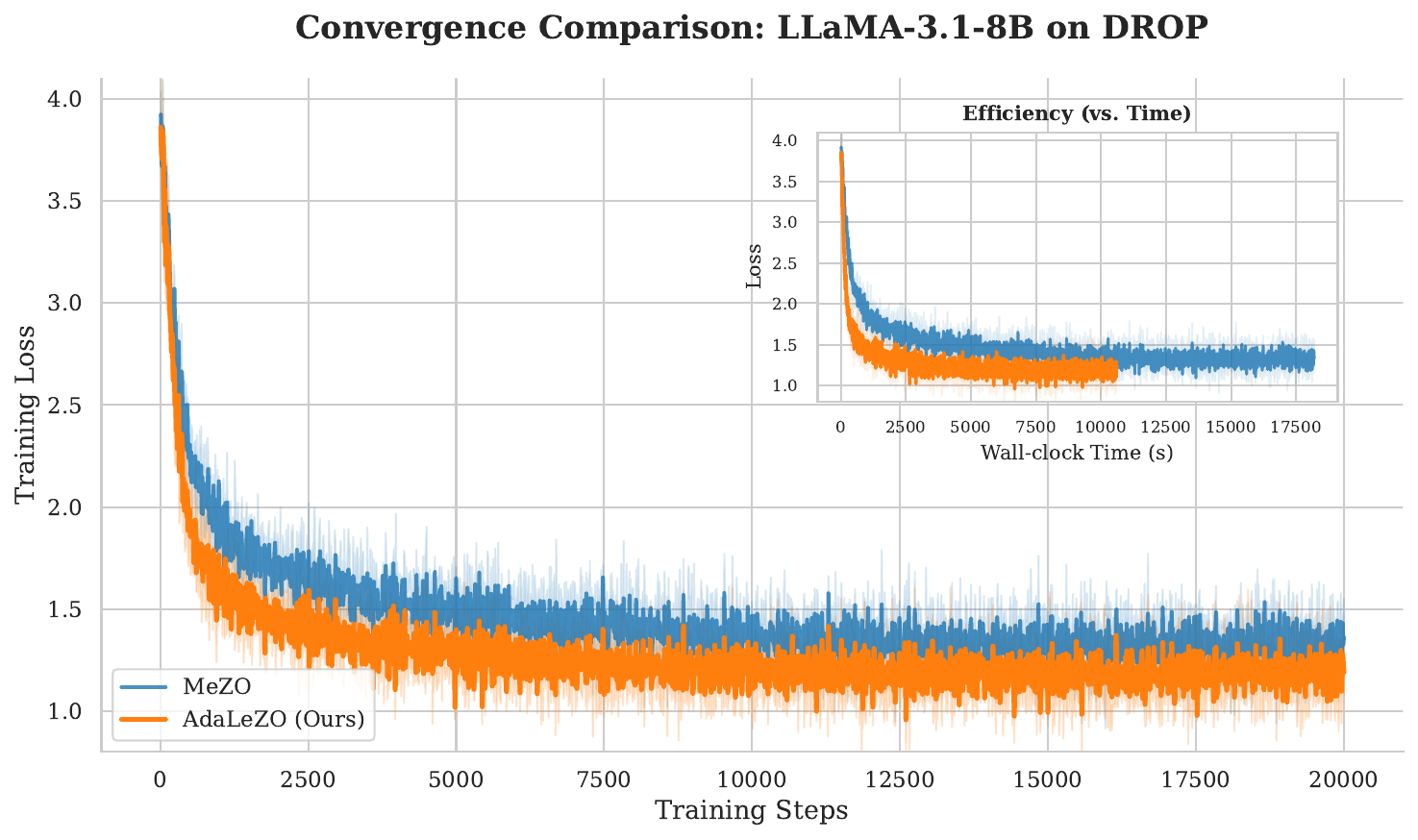}
        \caption{LLaMA-3.1-8B \& DROP}
        \label{fig:llama3_1_wic}
    \end{subfigure}
    \caption{
    Loss convergence curves for fine-tuning LLaMA models using ZO optimizers. In the main plot, the x-axis represents training steps, while in the inset, it indicates wall-clock time. Evidently, under the same number of fine-tuning steps, \method{} achieves faster convergence compared to MeZO.
    }
    \label{fig:llama_loss_convergence}
\end{figure*}
\section{Convergence Analysis}
\textbf{\method{} Achieves Faster Convergence than MeZO.}
\Cref{fig:llama_loss_convergence} illustrates the loss convergence rates of fine-tuning LLaMA models across various tasks using both \method{} and MeZO. Under identical learning rates, \method{} consistently demonstrates faster convergence than MeZO, even without considering wall-clock time acceleration. This advantage is particularly pronounced on the more robust LLaMA-3.1-8B model; for example, on the MultiRC task, \method{} outpaces MeZO by more than $2\times$ in terms of convergence speed. The improvements in wall-clock time are even more substantial across all tasks. These results highlight the effectiveness of \method{} in variance reduction, enabling the model to rapidly converge to regions of lower loss.

\section{Theoretical Analysis}
\label{app:theory}

In this section, we provide a comprehensive convergence analysis of the \method{} algorithm. We rely on the Gaussian smoothing framework to analyze the properties of ZO optimization. We first introduce the necessary notations and assumptions, then analyze the bias and variance of the \method{} estimator, and finally establish the convergence rate for non-convex objective functions.

\subsection{Preliminaries and Assumptions}

Consider the optimization problem $\min_{\theta \in \mathbb{R}^d} \mathcal{L}(\theta)$. We partition the parameter vector $\theta$ into $L$ groups according to layer structure, denoted as $\{\theta^{(1)}, \dots, \theta^{(L)}\}$.

\paragraph{Gaussian Smoothing.}
Following Nesterov and Spokoiny \citep{nesterov2017random}, we define the Gaussian smoothed approximation of $\mathcal{L}$ with smoothing parameter $\mu > 0$ as:
\begin{equation}
    \mathcal{L}_\mu(\theta) = \mathbb{E}_{u \sim \mathcal{N}(0, I_d)} [\mathcal{L}(\theta + \mu u)].
\end{equation}
It is well-known that $\mathcal{L}_\mu$ is continuously differentiable even if $\mathcal{L}$ is non-smooth. The gradient of $\mathcal{L}_\mu$ is given by:
\begin{equation}
    \nabla \mathcal{L}_\mu(\theta) = \mathbb{E}_{u \sim \mathcal{N}(0, I_d)} \left[ \frac{\mathcal{L}(\theta + \mu u) - \mathcal{L}(\theta)}{\mu} u \right].
\end{equation}

\paragraph{Assumptions.}
We make the following standard assumptions for non-convex ZO optimization~\citep{ghadimi2013stochastic, liu2018zeroth}.

\begin{assumption}[L-Smoothness]
\label{ass:smoothness}
The objective function $\mathcal{L}(\theta)$ is differentiable and $L_g$-smooth, meaning its gradient is Lipschitz continuous:
\begin{equation}
\begin{split}
    \| \nabla \mathcal{L}(\theta_1) - \nabla \mathcal{L}(\theta_2) \| \le L_g \| \theta_1 - \theta_2 \|, \\
    \quad \forall \theta_1, \theta_2 \in \mathbb{R}^d.
\end{split}
\end{equation}
\end{assumption}

\begin{assumption}[Bounded Gradient]
\label{ass:bounded_grad}
The gradient of the objective function is bounded: 
\begin{equation}
    \mathbb{E}[\|\nabla \mathcal{L}(\theta)\|^2] \le G^2.
\end{equation}
 
\end{assumption}
\Cref{ass:bounded_grad} is often relaxed in proofs, but it helps simplify the variance bound.

\begin{assumption}[Function Boundedness]
\label{ass:bounded_func}
The optimal function value is bounded from below:
\begin{equation}
    \mathcal{L}^* = \inf_{\theta} \mathcal{L}(\theta) > -\infty.
\end{equation}
\end{assumption}

\paragraph{Key Lemmas for Gaussian Smoothing.}
Based on \Cref{ass:smoothness}, the smoothed function $\mathcal{L}_\mu$ inherits smoothness and approximates $\mathcal{L}$ well.

\begin{lemma}[Properties of $\mathcal{L}_\mu$, \citep{nesterov2017random}]
\label{lem:smoothing_props}
Under \Cref{ass:smoothness}, for any $\theta \in \mathbb{R}^d$:
\begin{enumerate}
    \item The gradient difference is bounded: $\| \nabla \mathcal{L}_\mu(\theta) - \nabla \mathcal{L}(\theta) \| \le \frac{\mu d L_g}{2}$.
    \item $\mathcal{L}_\mu$ is $L_\mu$-smooth with $L_\mu \le L_g$.
    \item $\mathbb{E}_{u \sim \mathcal{N}(0, I)} [\| \nabla \mathcal{L}_\mu(\theta) - \frac{\mathcal{L}(\theta + \mu u) - \mathcal{L}(\theta)}{\mu} u \|^2] \le \mu^2 d^2 L_g^2 + d \|\nabla \mathcal{L}(\theta)\|^2$.
\end{enumerate}
\end{lemma}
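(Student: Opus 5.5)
Items~1 and~2 are classical and I would only re-derive them from \Cref{ass:smoothness}; item~3 is where the work is. For item~2, I would write $\nabla\mathcal{L}_\mu(\theta)=\mathbb{E}_u[\nabla\mathcal{L}(\theta+\mu u)]$, which holds by differentiating under the expectation. Then
\[
\|\nabla\mathcal{L}_\mu(\theta_1)-\nabla\mathcal{L}_\mu(\theta_2)\|\le\mathbb{E}_u\|\nabla\mathcal{L}(\theta_1+\mu u)-\nabla\mathcal{L}(\theta_2+\mu u)\|\le L_g\|\theta_1-\theta_2\|,
\]
so $L_\mu\le L_g$.

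For item~1, I would use the Stein identity $\mathbb{E}_u[\mathcal{L}(\theta)u]=0$ and $\mathbb{E}_u[uu^\top]=I$ to write
\[
\nabla\mathcal{L}_\mu(\theta)-\nabla\mathcal{L}(\theta)=\frac1\mu\,\mathbb{E}_u\!\left[r_\mu(u)\,u\right],\qquad r_\mu(u)\triangleq\mathcal{L}(\theta+\mu u)-\mathcal{L}(\theta)-\mu\langle\nabla\mathcal{L}(\theta),u\rangle .
\]
By $L_g$-smoothness, $|r_\mu(u)|\le \tfrac{L_g\mu^2}{2}\|u\|^2$. Hence the norm is at most $\tfrac{L_g\mu}{2}\mathbb{E}\|u\|^3\le\tfrac{L_g\mu}{2}(d+3)^{3/2}$. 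To land on the stated $\tfrac{\mu dL_g}{2}$ I would not use the crude $\|u\|^3$ bound. Instead I would project onto an arbitrary unit vector $v$: bound $\mathbb{E}[|r_\mu(u)|\,|\langle u,v\rangle|]$ by Cauchy--Schwarz as $\tfrac{L_g\mu^2}{2}\sqrt{\mathbb{E}\|u\|^4}$, and use $\mathbb{E}\|u\|^4=d(d+2)$. This gives $\tfrac{L_g\mu}{2}\sqrt{d(d+2)}$, which is $\tfrac{\mu dL_g}{2}$ up to the lower-order term $\sqrt{1+2/d}$; I expect to absorb or state this explicitly.

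For item~3, I would use the same decomposition. Writing $g=\nabla\mathcal{L}(\theta)$, the estimator equals $\langle g,u\rangle u+\tfrac{r_\mu(u)}{\mu}u$. I would bound the variance by the second moment minus $\|\nabla\mathcal{L}_\mu\|^2$, i.e.\ $\mathbb{E}\|X-\mathbb{E}X\|^2\le\mathbb{E}\|X-c\|^2$ with the choice $c=g$. The linear part contributes $\mathbb{E}\|\langle g,u\rangle u-g\|^2=(d+1)\|g\|^2$, using the Gaussian moment $\mathbb{E}[\langle g,u\rangle^2\|u\|^2]=(d+2)\|g\|^2$. The remainder part contributes $\tfrac1{\mu^2}\mathbb{E}[r_\mu(u)^2\|u\|^2]$. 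The cross term is handled either by Young's inequality or by noting that it is odd in $u$ for the quadratic part of $r_\mu$. Matching the stated constant $d\|g\|^2$, rather than $(d+1)$, would require centering at $\nabla\mathcal{L}_\mu$ and using item~1 to recover the missing $\|g\|^2$. I would do exactly this, at the price of adding a $\mu^2d^2L_g^2$-type term, which is the form the statement allows.

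The main obstacle is the remainder moment. The pointwise bound $|r_\mu|\le\tfrac{L_g\mu^2}{2}\|u\|^2$ yields $\tfrac{\mu^2L_g^2}{4}\mathbb{E}\|u\|^6=\tfrac{\mu^2L_g^2}{4}d(d+2)(d+4)$, which is cubic in $d$, not the stated $\mu^2d^2L_g^2$. What I would try first is to avoid the pointwise bound on $r_\mu$ and work with the direction-wise quantity. Since $r_\mu(u)=\mu^2\int_0^1(1-s)\,u^\top\nabla^2\mathcal{L}(\theta+s\mu u)u\,ds$ in the twice-differentiable case, I would exploit that $u^\top Hu$ concentrates around $\operatorname{tr}H$ with fluctuations $O(\sqrt d\,L_g)$. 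I would then bound $\mathbb{E}[r_\mu^2\|u\|^2]$ using $\mathbb{E}[(u^\top Hu)^2\|u\|^2]$, evaluated via Gaussian moment identities.

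I should flag honestly that for $H=L_gI$ this still gives order $L_g^2d^3$. So the exact $d^2$ scaling can only hold after an additional restriction, for instance $|\operatorname{tr}\nabla^2\mathcal{L}|\lesssim L_g\sqrt d$, or by interpreting the term with $\|u\|^2$ replaced by its typical value $d$ via a truncation/concentration argument on $\{\|u\|^2\le 2d\}$ with a negligible tail. If neither route closes the gap, the constant in the first term of item~3 has to be weakened to $O(\mu^2d^3L_g^2)$ as in \citep{nesterov2017random}, and the downstream convergence bound adjusted accordingly.
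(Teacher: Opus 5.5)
Item~2 is fine, and you are right to distrust item~3. However, one step of your plan cannot work, because item~3 is false as stated, in both of its terms. Write $g=\nabla\mathcal{L}(\theta)$ and $X=\frac{\mathcal{L}(\theta+\mu u)-\mathcal{L}(\theta)}{\mu}u$, and take $\mathcal{L}(\theta)=\tfrac{L_g}{2}\|\theta\|^2$. Then $X=\langle g,u\rangle u+\tfrac{L_g\mu}{2}\|u\|^2u$ and $\nabla\mathcal{L}_\mu(\theta)=g$. The cross term is odd in $u$, so
\[
\mathbb{E}\|X-\nabla\mathcal{L}_\mu(\theta)\|^2=(d+1)\|g\|^2+\tfrac{\mu^2L_g^2}{4}\,d(d+2)(d+4).
\]
This exceeds the claimed $\mu^2d^2L_g^2+d\|g\|^2$ by $\|g\|^2+\tfrac{\mu^2L_g^2d}{4}(d^2+2d+8)>0$, for every $d$, $\mu$ and $\theta$.

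In particular, your plan to recover the missing $\|g\|^2$ by centering at $\nabla\mathcal{L}_\mu$ fails. The left side is already centered at $\mathbb{E}X=\nabla\mathcal{L}_\mu(\theta)$. The identity $\mathbb{E}\|X-\nabla\mathcal{L}_\mu\|^2=\mathbb{E}\|X-g\|^2-\|\nabla\mathcal{L}_\mu-g\|^2$ subtracts only a term that vanishes for linear and quadratic $\mathcal{L}$. For linear $\mathcal{L}$ the left side is exactly $(d+1)\|g\|^2$ for every $\mu$, so the deficit survives $\mu\to0$ and no $\mu^2d^2L_g^2$ term can absorb it.

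The truncation idea is also a dead end. The $d^3$ arises on the typical event $\|u\|^2\approx d$, where $\tfrac{\mu^2L_g^2}{4}\|u\|^6\approx\tfrac{\mu^2L_g^2}{4}d^3$, not in a tail. What your decomposition plus Young's inequality does prove is
$\mathbb{E}\|X-\nabla\mathcal{L}_\mu\|^2\le\mathbb{E}\|X-g\|^2\le 2(d+1)\|g\|^2+\tfrac{\mu^2L_g^2}{2}d(d+2)(d+4)$. A corrected lemma of that shape is the right target.

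There is no proof in the paper to compare with; the lemma is only attributed to Nesterov--Spokoiny. Their Gaussian-smoothing bounds are $\|\nabla\mathcal{L}_\mu-\nabla\mathcal{L}\|\le\tfrac{\mu L_g}{2}(d+3)^{3/2}$ and $\mathbb{E}\|X\|^2\le\tfrac{\mu^2L_g^2}{2}(d+6)^3+2(d+4)\|g\|^2$. The stated orders $\mu dL_g$ and $\mu^2d^2L_g^2$ are those of uniform-sphere smoothing with the $\tfrac{d}{\mu}$-scaled estimator. So the citation does not support the lemma, and your $O(\mu^2d^3L_g^2)$ fallback is what the source actually gives.

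For item~1, your Cauchy--Schwarz route stops at $\tfrac{L_g\mu}{2}\sqrt{d(d+2)}$. That is strictly larger than the claim, so absorbing $\sqrt{1+2/d}$ is not a proof. The identity you already use for item~2 gives directly
$\|\nabla\mathcal{L}_\mu(\theta)-g\|\le\mathbb{E}\|\nabla\mathcal{L}(\theta+\mu u)-g\|\le L_g\mu\,\mathbb{E}\|u\|\le L_g\mu\sqrt d$,
which is at most $\tfrac{\mu dL_g}{2}$ whenever $d\ge4$. Alternatively, computing $\mathbb{E}[\|u\|^2|\langle u,v\rangle|]=(d+1)\sqrt{2/\pi}$ exactly for a unit vector $v$ would also close your own route for $d\ge4$. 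Some such restriction is unavoidable. For $d=1$ and $\mathcal{L}(\theta)=\tfrac{L_g}{2}\theta|\theta|$, so that $\mathcal{L}'(\theta)=L_g|\theta|$, one gets $|\mathcal{L}_\mu'(0)-\mathcal{L}'(0)|=L_g\mu\sqrt{2/\pi}>\tfrac{L_g\mu}{2}$.
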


\subsection{Estimator Analysis}

\fei{\paragraph{Subspace Smoothing vs. Full Smoothing.}
Unlike standard ZO methods that perturb the entire parameter space, \method{} generates perturbations strictly within an active subspace determined by the sampled layers $\mathcal{I}_t$. From the perspective of Randomized Block Coordinate Descent (RBCD), the finite difference on this sparse subspace estimates the gradient of the objective, smoothed only over that specific subspace. While masking introduces cross-layer Hessian interference relative to full-space Gaussian smoothing, Taylor expansion shows that these cross-layer terms have zero expectation to first order due to the independence of layer-wise perturbations. As $\mu \rightarrow 0$, both the subspace-smoothed and full-smoothed expectations asymptotically converge to the true non-smooth gradient $\nabla \mathcal{L}(\theta)$, with the discrepancy rigorously bounded by $O(\mu d L_g)$. The primary structural bias in our practical implementation arises not from this sparse sampling itself, but from the variance-control clipping mechanism, which we explicitly bound in the following sections.}

Let $z_t \sim \mathcal{N}(0, I_d)$ be the random perturbation vector at step $t$. The standard symmetric difference ZO estimator is:
\begin{equation}
    \hat{g}_t^{\text{ZO}} = \frac{\mathcal{L}(\theta_t + \mu z_t) - \mathcal{L}(\theta_t - \mu z_t)}{2\mu} z_t.
\end{equation}
From \citep{malladi2023fine}, we know $\mathbb{E}_{z_t}[\hat{g}_t^{\text{ZO}}] = \nabla \mathcal{L}_\mu(\theta_t)$.

\method{} applies a masking matrix $M_t$ based on sampling. Let $p_t \in \Delta^{L-1}$ be the sampling distribution. 
Let $\rho \in (0, 1]$ be the sampling ratio parameter (corresponding to \texttt{adalezo\_k\_ratio}). 
The number of sampling draws is determined by $K = \max(1, \lfloor \rho L \rfloor)$.
We perform $K$ independent draws with replacement. Let $N_{t,l} \in \{0, \dots, K\}$ be the number of times layer $l$ is selected. The \method{} estimator is:
\begin{equation}
    \hat{g}_t^{\text{Ada},(l)} = \frac{N_{t,l}}{K p_t(l)} \hat{g}_t^{\text{ZO},(l)}.
\end{equation}

\subsubsection{Unbiasedness}
\begin{theorem}[Unbiasedness wrt Smoothed Gradient]
\label{thm:unbiasedness}
The \method{} estimator is an unbiased estimator of the smoothed gradient $\nabla \mathcal{L}_\mu(\theta_t)$.
\end{theorem}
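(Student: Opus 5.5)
The plan is to prove \Cref{thm:unbiasedness} by conditioning on the sampling history and splitting the randomness into two independent sources: the layer draws (which determine the multiplicities $N_{t,l}$) and the Gaussian perturbation $z_t$. Let $\mathcal{F}_t$ denote the sigma-algebra generated by everything up to step $t$, so that $\theta_t$ and $p_t$ are $\mathcal{F}_t$-measurable. In the estimator as written in this section, $\hat{g}_t^{\text{ZO},(l)}$ is the $l$-th block of the full symmetric-difference estimator built from a dense $z_t$. We take $z_t$ to be drawn independently of the $K$ categorical draws given $\mathcal{F}_t$. Under this reading, the product structure $\frac{N_{t,l}}{K p_t(l)} \hat{g}_t^{\text{ZO},(l)}$ factorizes in conditional expectation.

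The first step is to compute the moment of the multiplicity. Since the $K$ draws are i.i.d.\ categorical with law $p_t$, we have $N_{t,l} \mid \mathcal{F}_t \sim \mathrm{Binomial}(K, p_t(l))$. Hence $\mathbb{E}[N_{t,l}\mid\mathcal{F}_t] = K p_t(l)$, and therefore $\mathbb{E}\bigl[\frac{N_{t,l}}{K p_t(l)}\mid\mathcal{F}_t\bigr] = 1$. This is well defined because $p_t(l) \ge \gamma/L > 0$ by \Cref{eq:prob} whenever $\gamma>0$. (When $\gamma = 0$ the softmax still gives strictly positive probabilities, so the claim also holds there.) The clipping is set aside by taking $C_{\text{clip}}\to\infty$, consistent with the statement in the main text.

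The second step is the factorization. Using independence of the draws and $z_t$ given $\mathcal{F}_t$, we get $\mathbb{E}[\hat{g}_t^{\text{Ada},(l)}\mid\mathcal{F}_t] = \mathbb{E}\bigl[\frac{N_{t,l}}{Kp_t(l)}\mid\mathcal{F}_t\bigr]\,\mathbb{E}_{z_t}[\hat{g}_t^{\text{ZO},(l)}] = \mathbb{E}_{z_t}[\hat{g}_t^{\text{ZO},(l)}]$. We then invoke the fact quoted from \citep{malladi2023fine} that $\mathbb{E}_{z_t}[\hat{g}_t^{\text{ZO}}] = \nabla\mathcal{L}_\mu(\theta_t)$, and take its $l$-th block. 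If one wants this self-contained, it can be checked by symmetry of $z_t\mapsto -z_t$: the symmetric difference becomes the average of two forward differences, each of which has expectation $\nabla\mathcal{L}_\mu$ by the gradient formula for $\mathcal{L}_\mu$. Stacking over $l=1,\dots,L$ gives $\mathbb{E}[\hat{g}_t^{\text{Ada}}\mid\mathcal{F}_t]=\nabla\mathcal{L}_\mu(\theta_t)$, and the tower property gives the unconditional statement.

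The main obstacle is the gap between this idealized estimator and the implemented one in \Cref{eq:adalezo_grad_update}. In the implemented estimator, the scalar finite difference is computed with the sparse $\tilde z_t$, so the scalar depends on which layers are active and the factorization above fails. If the theorem is to cover that version, I would first try conditioning on the active set $\mathcal{I}_t$. Writing $D=\mathrm{diag}$ of the block mask, the subspace estimator has conditional expectation equal to the $\mathcal{I}_t$-block of $\nabla$ of the subspace-smoothed function $\mathbb{E}_u[\mathcal{L}(\theta+\mu D u)]$. This is exactly $\nabla^{(l)}\mathcal{L}(\theta_t)$ for quadratics, and more generally it equals that quantity up to $O(\mu d L_g)$, via the Taylor argument sketched in the subspace-smoothing paragraph. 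Exact unbiasedness with respect to $\nabla\mathcal{L}_\mu$ then only holds asymptotically as $\mu\to 0$. In that case I would state the result for the estimator as defined in this section (dense $\hat g^{\text{ZO}}$ masked and reweighted) and relegate the sparse-perturbation discrepancy to a bias term of order $\mu d L_g$, controlled using \Cref{lem:smoothing_props}.
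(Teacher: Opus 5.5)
Your proposal is correct and follows the paper's proof: the paper also uses $N_{t,l}\sim\mathrm{Binomial}(K,p_t(l))$ to get $\mathbb{E}[N_{t,l}/(Kp_t(l))]=1$, and it conditions on $z_t$ and applies the tower property where you factorize by independence, which is the same step. It then invokes $\mathbb{E}_{z_t}[\hat{g}_t^{\text{ZO}}]=\nabla\mathcal{L}_\mu(\theta_t)$ from \citep{malladi2023fine}, and your caveat is accurate: the result covers only the unclipped, dense-$z_t$ estimator and not the sparse-$\tilde z_t$ implementation in \Cref{eq:adalezo_grad_update}, a discrepancy the paper addresses only informally in its subspace-smoothing remark.
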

\begin{proof}
First, we analyze the expectation conditioned on the perturbation $z_t$. The randomness comes solely from the sampling counts $N_{t,l}$. Since sampling is with replacement, $N_{t,l} \sim \text{Binomial}(K, p_t(l))$, implies $\mathbb{E}[N_{t,l}] = K p_t(l)$.
\begin{equation}
\begin{split}
    \mathbb{E}_{\mathcal{S}_t} [\hat{g}_t^{\text{Ada},(l)} \mid z_t] &= \mathbb{E}_{\mathcal{S}_t} \left[ \frac{N_{t,l}}{K p_t(l)} \hat{g}_t^{\text{ZO},(l)} \right] \\
    &= \frac{\hat{g}_t^{\text{ZO},(l)}}{K p_t(l)} \mathbb{E}_{\mathcal{S}_t}[N_{t,l}] \\
    &= \hat{g}_t^{\text{ZO},(l)}.
\end{split}
\end{equation}
Since this holds for all layers, $\mathbb{E}_{\mathcal{S}_t} [\hat{g}_t^{\text{Ada}} \mid z_t] = \hat{g}_t^{\text{ZO}}$.
Taking the expectation over $z_t$:
\begin{equation}
    \mathbb{E}_{z_t, \mathcal{S}_t} [\hat{g}_t^{\text{Ada}}] = \mathbb{E}_{z_t} [\hat{g}_t^{\text{ZO}}] = \nabla \mathcal{L}_\mu(\theta_t).
\end{equation}

\begin{proposition}[Finite-C Bias Bound]
\label{prop:bias_bound}
Under \Cref{ass:bounded_grad}, the bias introduced by the clipping threshold $C$ is bounded by:
\begin{equation}
\begin{split}
    \| \text{Bias} \| & = \| \mathbb{E}[\tilde{g}_t^{\text{Ada}}] - \nabla \mathcal{L}_\mu(\theta_t) \| \\
    & \le G \sum_{l: p_t(l) < \frac{1}{CK}} (1 - C K p_t(l)).
\end{split}
\end{equation}
\end{proposition}
\begin{proof}
The expected value of the clipped estimator for layer $l$ is $\mathbb{E}_{\mathcal{S}_t}[\tilde{g}_t^{\text{Ada},(l)}] = \tilde{W}_{t,l} \mathbb{E}[N_{t,l}] \hat{g}_t^{\text{ZO},(l)} = \min(1, C K p_t(l)) \hat{g}_t^{\text{ZO},(l)}$.
The bias is the norm of the difference between this expectation and the unbiased ZO gradient. Notice that $(W_{t,l} - \tilde{W}_{t,l}) > 0$ if and only if $C K p_t(l) < 1$, i.e., $p_t(l) < \frac{1}{CK}$.
Applying the triangle inequality and taking the expectation over $z_t$:
\begin{equation}
    \| \text{Bias} \| \le \sum_{l: p_t(l) < \frac{1}{CK}} (1 - C K p_t(l)) \mathbb{E} [\| \hat{g}_t^{\text{ZO},(l)} \|].
\end{equation}
Since $\mathbb{E}[\|\hat{g}_t^{\text{ZO},(l)}\|] \le \mathbb{E}[\|\hat{g}_t^{\text{ZO}}\|] \le G$ (derived from \Cref{ass:bounded_grad} and \Cref{lem:smoothing_props}), the bound holds.
\end{proof}
This explicit formulation corroborates our ablation study: setting a moderate $C$ (e.g., $C=4$) ensures that only layers with negligibly small sampling probabilities contribute to the bounded bias, effectively balancing the bias-variance trade-off.
\end{proof}

\subsubsection{Variance Reduction}
\begin{theorem}[Variance Decomposition and Optimality]
\label{thm:variance}
Let $v_t^{(l)} = (\hat{g}_t^{\text{ZO},(l)})^2$ be the squared magnitude of the dense ZO gradient on layer $l$. Conditioned on $z_t$, the total variance of the \method{} estimator is:
\begin{equation}
    \text{Var}_{\mathcal{S}_t}(\hat{g}_t^{\text{Ada}} \mid z_t) = \frac{1}{K} \sum_{l=1}^L v_t^{(l)} \left( \frac{1}{p_t(l)} - 1 \right).
\end{equation}
This variance is minimized when $p_t(l) \propto \sqrt{v_t^{(l)}} = |\hat{g}_t^{\text{ZO},(l)}|$.
\end{theorem}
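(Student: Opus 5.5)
The plan is to compute the conditional variance directly from the multinomial structure of the sampling counts and then minimize the result over the simplex with a Cauchy--Schwarz argument. Throughout, condition on $z_t$, so that every dense block $\hat{g}_t^{\text{ZO},(l)}$ is a fixed vector and the only randomness is in $(N_{t,1},\dots,N_{t,L})\sim\text{Multinomial}(K,p_t)$. I will read $v_t^{(l)}$ as $\|\hat{g}_t^{\text{ZO},(l)}\|^2$, which is the natural meaning of ``squared magnitude'' for a vector-valued block. I will read ``total variance'' as the trace of the conditional covariance, $\mathbb{E}_{\mathcal{S}_t}\|\hat{g}_t^{\text{Ada}}-\hat{g}_t^{\text{ZO}}\|^2$. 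This uses the conditional unbiasedness established in the proof of \Cref{thm:unbiasedness}. The estimator analysed is the unclipped one, $W_{t,l}=1/(Kp_t(l))$, matching the definition used in \Cref{thm:unbiasedness}.

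\textbf{Step 1 (decomposition).} Since the layers occupy disjoint coordinate blocks, the squared norm splits as
\begin{equation*}
\|\hat{g}_t^{\text{Ada}}-\hat{g}_t^{\text{ZO}}\|^2=\sum_{l=1}^L\Bigl(\frac{N_{t,l}}{Kp_t(l)}-1\Bigr)^2 v_t^{(l)}.
\end{equation*}
The cross-layer covariances $\text{Cov}(N_{t,l},N_{t,m})=-Kp_t(l)p_t(m)$ therefore never appear in the trace. This observation is what keeps the formula diagonal.

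\textbf{Step 2 (per-layer variance).} Each marginal satisfies $N_{t,l}\sim\text{Binomial}(K,p_t(l))$, so $\text{Var}(N_{t,l})=Kp_t(l)(1-p_t(l))$. Hence
\begin{equation*}
\mathbb{E}\Bigl(\frac{N_{t,l}}{Kp_t(l)}-1\Bigr)^2=\frac{Kp_t(l)(1-p_t(l))}{K^2p_t(l)^2}=\frac{1}{K}\Bigl(\frac{1}{p_t(l)}-1\Bigr).
\end{equation*}
Summing over $l$ gives the claimed expression.

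\textbf{Step 3 (optimality).} The term $-\frac{1}{K}\sum_l v_t^{(l)}$ does not depend on $p_t$, so it suffices to minimize $\sum_l v_t^{(l)}/p_t(l)$ subject to $\sum_l p_t(l)=1$ and $p_t(l)>0$. By Cauchy--Schwarz,
\begin{equation*}
\Bigl(\sum_l\sqrt{v_t^{(l)}}\Bigr)^2=\Bigl(\sum_l\sqrt{v_t^{(l)}/p_t(l)}\,\sqrt{p_t(l)}\Bigr)^2\le\sum_l\frac{v_t^{(l)}}{p_t(l)}\cdot\sum_l p_t(l)=\sum_l\frac{v_t^{(l)}}{p_t(l)}.
\end{equation*}
Equality holds iff $\sqrt{v_t^{(l)}/p_t(l)}\propto\sqrt{p_t(l)}$, that is, iff $p_t(l)\propto\sqrt{v_t^{(l)}}=\|\hat{g}_t^{\text{ZO},(l)}\|$. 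As a check, a Lagrange-multiplier computation gives the same stationary point, and convexity of $p\mapsto v/p$ makes it the global minimizer.

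\textbf{Main obstacle.} The calculation itself is routine. The delicate part is stating precisely what is being minimized. First, ``total variance'' has to be the trace, otherwise the multinomial covariances enter. Second, the optimum is over the whole open simplex: it ignores the exploration floor $p_t(l)\ge\gamma/L$ from \Cref{eq:prob} and the clipping at $C_{\text{clip}}$. Third, layers with $v_t^{(l)}=0$ must be handled, which I would do by allowing $p_t(l)\to0$ on them, or by restricting to the layers with nonzero gradient. I would state these caveats explicitly, noting that the optimizer depends on $z_t$ and is therefore an oracle target that the bandit reward only approximates.
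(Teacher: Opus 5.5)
Your proposal is correct and follows essentially the same route as the paper: the binomial marginal variance of $N_{t,l}$ gives each layer's term, the terms are summed over layers, and Cauchy--Schwarz is applied to $\sum_l v_t^{(l)}/p_t(l)$ under $\sum_l p_t(l)=1$. You justify the summation through the trace over disjoint coordinate blocks, which is cleaner than the paper's appeal to independence of the counts: that independence fails under multinomial sampling, where the counts are negatively correlated, but as you note it is never needed.
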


\begin{proof}
Using the properties of the Binomial distribution $\text{Var}(N_{t,l}) = K p_t(l)(1 - p_t(l))$:
\begin{equation}
\begin{split}
    \text{Var}(\hat{g}_t^{\text{Ada},(l)} \mid z_t) &= \left( \frac{\hat{g}_t^{\text{ZO},(l)}}{K p_t(l)} \right)^2 \text{Var}(N_{t,l}) \\
    &= \frac{v_t^{(l)}}{K^2 p_t(l)^2} \cdot K p_t(l)(1 - p_t(l)) \\
    &= \frac{v_t^{(l)}}{K} \left( \frac{1}{p_t(l)} - 1 \right).
\end{split}
\end{equation}
Summing over layers (assuming independence of sampling counts between layers implies additivity of variance for the norm, or simply analyzing the trace of the covariance):
\begin{equation}
\begin{split}
    V(p) & = \sum_{l=1}^L \text{Var}(\hat{g}_t^{\text{Ada},(l)} \mid z_t) \\
    & = \frac{1}{K} \left( \sum_{l=1}^L \frac{v_t^{(l)}}{p_t(l)} - \sum_{l=1}^L v_t^{(l)} \right).
\end{split}
\end{equation}
To minimize $V(p)$ subject to $\sum p_t(l) = 1$, we use the Cauchy-Schwarz inequality on the first term:
\begin{equation}
    \left( \sum_{l=1}^L \frac{v_t^{(l)}}{p_t(l)} \right) \left( \sum_{l=1}^L p_t(l) \right) \ge \left( \sum_{l=1}^L \sqrt{v_t^{(l)}} \right)^2.
\end{equation}
Equality holds when $\sqrt{v_t^{(l)}}/p_t(l) = c$, i.e., $p_t(l) \propto \sqrt{v_t^{(l)}} = |\hat{g}_t^{\text{ZO},(l)}|$.
Thus, allocating the sampling probability proportional to the gradient magnitude minimizes the estimation variance.
\end{proof}

\fei{While \Cref{thm:variance} demonstrates the theoretical optimality of adaptive sampling, the unclipped IPW weight can cause infinite variance if $p_t(l) \rightarrow 0$. The clipping mechanism resolves this while explicitly preserving the fundamental dimension dependence.

\begin{theorem}[Explicit Variance Bound with Clipping]
\label{thm:explicit_variance}
For the clipped \method{} estimator $\tilde{g}_t^{\text{Ada}}$, the second moment is strictly capped by the clipping threshold $C$, while preserving the intrinsic dimension dependence $\mathcal{O}(d)$ of the ZO gradient:
\begin{equation}
    \mathbb{E} [\| \tilde{g}_t^{\text{Ada}} \|^2] \le (C + 1) \mathbb{E} [\| \hat{g}_t^{\text{ZO}} \|^2].
\end{equation}
\end{theorem}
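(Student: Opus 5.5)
The plan is to work conditionally on $z_t$, bound each layer's conditional second moment by $(C+1)\|\hat{g}_t^{\text{ZO},(l)}\|^2$, and then sum over layers and take the outer expectation. I follow the estimator model of \Cref{thm:unbiasedness} and \Cref{prop:bias_bound}: the clipped estimator is $\tilde{g}_t^{\text{Ada},(l)} = \tilde{W}_{t,l} N_{t,l}\, \hat{g}_t^{\text{ZO},(l)}$ with $\tilde{W}_{t,l} = \min\left(\frac{1}{Kp_t(l)}, C\right)$. Given $z_t$, the only randomness is in the counts $N_{t,l}\sim\text{Binomial}(K,p_t(l))$. Hence
\begin{equation}
\mathbb{E}_{\mathcal{S}_t}\left[\|\tilde{g}_t^{\text{Ada},(l)}\|^2 \mid z_t\right] = \tilde{W}_{t,l}^2\, \mathbb{E}[N_{t,l}^2]\, \|\hat{g}_t^{\text{ZO},(l)}\|^2 .
\end{equation}

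The Binomial second moment splits into two pieces, and I bound each using a different side of the $\min$:
\begin{equation}
\mathbb{E}[N_{t,l}^2] = Kp_t(l)\bigl(1-p_t(l)\bigr) + K^2p_t(l)^2 \le Kp_t(l) + K^2 p_t(l)^2 .
\end{equation}
\begin{itemize}
\item \emph{Linear term.} Write $\tilde{W}_{t,l}^2 = \tilde{W}_{t,l}\cdot\tilde{W}_{t,l}$. Bound one factor by $\frac{1}{Kp_t(l)}$ and the other by $C$. This gives $\tilde{W}_{t,l}^2\, Kp_t(l) \le C$.
\item \emph{Quadratic term.} Bound both factors by $\frac{1}{Kp_t(l)}$. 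This gives $\tilde{W}_{t,l}^2 K^2 p_t(l)^2 \le 1$.
\end{itemize}
Together these yield
\begin{equation}
\mathbb{E}_{\mathcal{S}_t}\left[\|\tilde{g}_t^{\text{Ada},(l)}\|^2 \mid z_t\right] \le (C+1)\,\|\hat{g}_t^{\text{ZO},(l)}\|^2 .
\end{equation}
One could keep the sharper factor $(1-p_t(l))$ or $(K-1)/K$, but it is unnecessary for the stated constant.

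Next I sum over $l=1,\dots,L$. Because the layers partition the coordinates, $\sum_l \|\hat{g}_t^{\text{ZO},(l)}\|^2 = \|\hat{g}_t^{\text{ZO}}\|^2$, and no independence between the counts of different layers is needed. Taking expectation over $z_t$ then gives $\mathbb{E}\|\tilde{g}_t^{\text{Ada}}\|^2 \le (C+1)\,\mathbb{E}\|\hat{g}_t^{\text{ZO}}\|^2$. The $\mathcal{O}(d)$ dependence is inherited through \Cref{lem:smoothing_props}(3), which bounds $\mathbb{E}\|\hat{g}_t^{\text{ZO}}\|^2$ by $\mathcal{O}(d\|\nabla\mathcal{L}\|^2+\mu^2d^2L_g^2)$.

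The algebra is routine. The main point needing care is the modeling step. The practical estimator in \Cref{eq:adalezo_grad_update} uses a scalar computed from the \emph{sparse} perturbation $\tilde{z}_t$, so that scalar depends on $\mathcal{S}_t$. In that case $\hat{g}^{\text{ZO},(l)}$ is not independent of $N_{t,l}$, and the factorization above fails. I would adopt the paper's analytical model, in which the masked dense estimator is reweighted, as in \Cref{thm:unbiasedness}, and state this explicitly.

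If one wants to handle the sparse scalar directly, I would first try conditioning on the active set: bound the squared scalar uniformly over masks. One route is Lipschitz or smoothness control, using $(\text{scalar})^2 \le \|\nabla\mathcal{L}\|^2\|\tilde{z}\|^2$ plus $\mathcal{O}(\mu^2)$ terms. One then reuses the same two-sided use of the $\min$ on each count. This would give the same $(C+1)$ factor, but relative to a dense-perturbation second moment.
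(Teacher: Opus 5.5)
Your proof is correct and follows the paper's argument. You condition on $z_t$, factor the per-layer second moment as $\tilde{W}_{t,l}^2\,\mathbb{E}[N_{t,l}^2]\,\|\hat{g}_t^{\text{ZO},(l)}\|^2$, bound $\mathbb{E}[N_{t,l}^2]\le Kp_t(l)+K^2p_t(l)^2$, show the multiplier is at most $C+1$, and sum over the layer partition; the only difference is that the paper splits into clipped and unclipped cases while you apply the two sides of the $\min$ to the two terms separately, and your caveat that the argument relies on the dense-scalar model of \Cref{thm:unbiasedness} rather than the implemented sparse-scalar estimator is a fair point the paper leaves implicit.
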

\begin{proof}
For a specific layer $l$, the second moment is $\mathbb{E}[\|\tilde{g}_t^{\text{Ada},(l)}\|^2] = \tilde{W}_{t,l}^2 \mathbb{E}[N_{t,l}^2] \|\hat{g}_t^{\text{ZO},(l)}\|^2$.
For the Binomial variable $N_{t,l}$, $\mathbb{E}[N_{t,l}^2] = K p_t(l)(1 - p_t(l)) + K^2 p_t(l)^2 \le K p_t(l) + K^2 p_t(l)^2$. We bound the multiplier $\mathcal{M}_l = \tilde{W}_{t,l}^2 \mathbb{E}[N_{t,l}^2]$ by analyzing two disjoint cases:

\textbf{Case 1: No clipping occurs} ($\frac{1}{K p_t(l)} \le C$, which implies $K p_t(l) \ge \frac{1}{C}$).
\begin{equation}
\begin{split}
    \mathcal{M}_l & \le \frac{1}{K^2 p_t(l)^2} (K p_t(l) + K^2 p_t(l)^2) \\
    & = \frac{1}{K p_t(l)} + 1 \le C + 1.
\end{split}
\end{equation}

\textbf{Case 2: Clipping occurs} ($\frac{1}{K p_t(l)} > C$, which implies $K p_t(l) < \frac{1}{C}$).
\begin{equation}
\begin{split}
    \mathcal{M}_l & = C^2 (K p_t(l) + K^2 p_t(l)^2) \\ 
    & \le C^2 \left(\frac{1}{C} + \frac{1}{C^2}\right) = C + 1.
\end{split}
\end{equation}

In all scenarios, the multiplier $\mathcal{M}_l \le C + 1$. Summing the contributions over all layers:
\begin{equation}
\begin{split}
    \mathbb{E} [\| \tilde{g}_t^{\text{Ada}} \|^2] & \le (C + 1) \sum_{l=1}^L \mathbb{E} [\| \hat{g}_t^{\text{ZO},(l)} \|^2] \\ 
    & = (C + 1) \mathbb{E} [\| \hat{g}_t^{\text{ZO}} \|^2].
\end{split}
\end{equation}
\end{proof}
This explicitly resolves any concerns regarding a ``dimension-free paradox''. The fundamental dimension dependence $\mathcal{O}(d)$ is correctly and fully preserved within the dense ZO gradient's second moment $\mathbb{E}[\|\hat{g}_t^{\text{ZO}}\|^2]$. The clipped IPW mechanism via MAB strictly bounds the variance amplification multiplier at $\mathcal{O}(C)$, preventing unresolvable divergence without artificially suppressing the intrinsic dimension scaling.}

\subsection{Convergence Rate Analysis}

We now prove the main convergence theorem.

\begin{theorem}[Non-Convex Convergence]
\label{thm:main_convergence}
Suppose \Cref{ass:smoothness,ass:bounded_func} hold. Let the learning rate be $\eta_t = \frac{1}{\sqrt{L_\mu T}}$. Then, \method{} yields the following bound on the gradient of the smoothed function:
\begin{equation}
\begin{split}
    \min_{0 \le t < T} \mathbb{E} [\|\nabla \mathcal{L}_\mu(\theta_t)\|^2] & \le \frac{2 \sqrt{L_\mu}(\mathcal{L}(\theta_0) - \mathcal{L}^*)}{\sqrt{T}} \\
    &\quad + \frac{C \sigma^2}{\sqrt{T}},    
\end{split}
\end{equation}
where $C$ is a constant related to the variance. Furthermore, considering the smoothing bias, for the original function $\mathcal{L}$:
\begin{equation}
    \mathbb{E} [\|\nabla \mathcal{L}(\theta_t)\|^2] \le O\left(\frac{1}{\sqrt{T}}\right) + O(\mu^2 d^2).
\end{equation}
\end{theorem}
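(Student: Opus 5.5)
We follow the standard descent-lemma argument for stochastic gradient methods on the smoothed objective $\mathcal{L}_\mu$, treating \method{} as SGD with an (approximately) unbiased, second-moment-bounded oracle.

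\textbf{One-step descent.} By \Cref{lem:smoothing_props}(2), $\mathcal{L}_\mu$ is $L_\mu$-smooth. Applying the quadratic upper bound to $\theta_{t+1}=\theta_t-\eta\,\tilde g_t^{\text{Ada}}$ gives
\begin{equation*}
\mathcal{L}_\mu(\theta_{t+1}) \le \mathcal{L}_\mu(\theta_t) - \eta\langle \nabla\mathcal{L}_\mu(\theta_t),\tilde g_t^{\text{Ada}}\rangle + \tfrac{L_\mu\eta^2}{2}\|\tilde g_t^{\text{Ada}}\|^2 .
\end{equation*}
We take the conditional expectation over $(z_t,\mathcal{S}_t)$ given $\theta_t$. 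The sampling distribution $p_t$ is measurable with respect to the past, since it depends only on the $Q$-values. Hence \Cref{thm:unbiasedness} applies conditionally and the inner product becomes $\|\nabla\mathcal{L}_\mu(\theta_t)\|^2$ in the unclipped case. For the clipped estimator we instead keep the bias term from \Cref{prop:bias_bound}, either absorbing it into the constant or noting that it vanishes as $C_{\text{clip}}\to\infty$.

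\textbf{Second moment.} We bound $\mathbb{E}\|\tilde g_t^{\text{Ada}}\|^2\le (C+1)\mathbb{E}\|\hat g_t^{\text{ZO}}\|^2$ via \Cref{thm:explicit_variance}. Combining \Cref{lem:smoothing_props}(3) with \Cref{ass:bounded_grad} then gives $\mathbb{E}\|\hat g_t^{\text{ZO}}\|^2\le 2\mu^2d^2L_g^2+2(d+1)G^2$. We call this quantity $\sigma^2$, and the constant $C$ in the statement absorbs the factor $(C+1)L_\mu/2$.

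\textbf{Telescoping.} Summing over $t=0,\dots,T-1$ yields
\begin{equation*}
\eta\sum_t \mathbb{E}\|\nabla\mathcal{L}_\mu(\theta_t)\|^2 \le \mathcal{L}_\mu(\theta_0)-\mathcal{L}_\mu^* + \tfrac{L_\mu\eta^2 T}{2}(C+1)\sigma^2 .
\end{equation*}
We relate $\mathcal{L}_\mu(\theta_0)-\mathcal{L}^*_\mu$ to $\mathcal{L}(\theta_0)-\mathcal{L}^*$ using $0\le \mathcal{L}_\mu-\mathcal{L}\le \mu^2 L_g d/2$, and fold the resulting lower-order term into the constants. We then divide by $\eta T$, bound the minimum by the average, and substitute $\eta=1/\sqrt{L_\mu T}$. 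This gives a first term $\sqrt{L_\mu}(\mathcal{L}(\theta_0)-\mathcal{L}^*)/\sqrt T$, within the stated factor $2$, and a second term of order $\sigma^2/\sqrt T$. For the original function we use $\|\nabla\mathcal{L}\|^2\le 2\|\nabla\mathcal{L}_\mu\|^2+2\|\nabla\mathcal{L}_\mu-\nabla\mathcal{L}\|^2$ together with \Cref{lem:smoothing_props}(1), which contributes the $O(\mu^2d^2)$ term.

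\textbf{Main obstacle.} The delicate step is the inner-product term with the clipped estimator. \Cref{thm:unbiasedness} covers only the unclipped case, and the clipping bias is not zero. We will handle it in one of two ways. The first is to use the exploration floor $p_t(l)\ge\gamma/L$ and choose $C\ge L/(\gamma K)$; then no clipping ever occurs and the estimator is exactly unbiased. The second is to apply Young's inequality, $-\eta\langle\nabla\mathcal{L}_\mu,b_t\rangle\le \tfrac{\eta}{2}\|\nabla\mathcal{L}_\mu\|^2+\tfrac{\eta}{2}\|b_t\|^2$, bound $\|b_t\|$ by \Cref{prop:bias_bound}, and accept an additional non-vanishing bias floor. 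We commit to the first route, since it matches the statement's clean $O(1/\sqrt T)$ form.
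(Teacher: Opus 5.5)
Your proposal is essentially the paper's proof: the descent lemma on $\mathcal{L}_\mu$, unbiasedness via \Cref{thm:unbiasedness}, telescoping with the minimum bounded by the average, and transfer to $\mathcal{L}$ via \Cref{lem:smoothing_props}(1); where the paper silently analyzes the unclipped estimator, posits a variance bound $\sigma^2$ (hence needs $T$ large enough that $1-L_\mu\eta/2\ge 1/2$), and writes $\mathcal{L}(\theta_0)$ where its derivation gives $\mathcal{L}_\mu(\theta_0)$, you enforce the no-clipping regime via $\gamma>0$ and $C\ge L/(\gamma K)$, bound the second moment directly through \Cref{thm:explicit_variance} and \Cref{lem:smoothing_props}(3) (adding \Cref{ass:bounded_grad} and $\gamma>0$ as hypotheses, essentially what the paper's unexplained $\sigma^2$ tacitly requires), and bridge the initial gap explicitly, so your version is the more careful one. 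Two small fixes: $0\le\mathcal{L}_\mu-\mathcal{L}$ holds only for convex $\mathcal{L}$, but you only need $\mathcal{L}_\mu(\theta_0)\le\mathcal{L}(\theta_0)+\mu^2L_gd/2$ (from $L_g$-smoothness) and $\mathcal{L}_\mu^*\ge\mathcal{L}^*$, both valid here; and with $\eta=1/\sqrt{L_\mu T}$ the factor absorbed into the statement's constant is $(C+1)\sqrt{L_\mu}/2$, not $(C+1)L_\mu/2$.
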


\begin{proof}
\textbf{Step 1: Descent Lemma on Smoothed Function.}
Since $\mathcal{L}_\mu$ is $L_\mu$-smooth:
\begin{equation}
\begin{split}
    \mathcal{L}_\mu(\theta_{t+1}) \le \mathcal{L}_\mu(\theta_t) & + \langle \nabla \mathcal{L}_\mu(\theta_t), \theta_{t+1} - \theta_t \rangle \\
    & + \frac{L_\mu}{2} \| \theta_{t+1} - \theta_t \|^2.    
\end{split}
\end{equation}
Substituting the update rule $\theta_{t+1} = \theta_t - \eta_t \hat{g}_t^{\text{Ada}}$:
\begin{equation}
\begin{split}
    \mathcal{L}_\mu(\theta_{t+1}) \le \mathcal{L}_\mu(\theta_t) & - \eta_t \langle \nabla \mathcal{L}_\mu(\theta_t), \hat{g}_t^{\text{Ada}} \rangle \\
    & + \frac{L_\mu \eta_t^2}{2} \| \hat{g}_t^{\text{Ada}} \|^2.    
\end{split}
\end{equation}

\textbf{Step 2: Expectation over Sampling and Perturbation.}
Take the total expectation $\mathbb{E} = \mathbb{E}_{z_t, \mathcal{S}_t}[\cdot \mid \theta_t]$.
Using \Cref{thm:unbiasedness} ($\mathbb{E}[\hat{g}_t^{\text{Ada}}] = \nabla \mathcal{L}_\mu(\theta_t)$):
\begin{equation}
\begin{split}
    \mathbb{E}[\mathcal{L}_\mu(\theta_{t+1})] \le \mathcal{L}_\mu(\theta_t) & - \eta_t \| \nabla \mathcal{L}_\mu(\theta_t) \|^2 \\
    & + \frac{L_\mu \eta_t^2}{2} \mathbb{E}[\|\hat{g}_t^{\text{Ada}}\|^2].    
\end{split}
\end{equation}
Using the variance definition $\mathbb{E}[\|X\|^2] = \text{Var}(X) + \|\mathbb{E}[X]\|^2$:
\begin{equation}
    \mathbb{E}[\|\hat{g}_t^{\text{Ada}}\|^2] = \text{Var}(\hat{g}_t^{\text{Ada}}) + \|\nabla \mathcal{L}_\mu(\theta_t)\|^2.
\end{equation}
Let the variance be bounded by $\sigma^2$ (a combination of ZO variance and sampling variance).
\begin{equation}
\begin{split}
    \mathbb{E}[\mathcal{L}_\mu(\theta_{t+1})] & \le \mathcal{L}_\mu(\theta_t) \\
    & \quad - \eta_t \left(1 - \frac{L_\mu \eta_t}{2}\right) \|\nabla \mathcal{L}_\mu(\theta_t)\|^2 \\
    & \quad + \frac{L_\mu \eta_t^2 \sigma^2}{2}.    
\end{split}
\end{equation}

\textbf{Step 3: Telescoping Sum.}
Set $\eta_t = \frac{1}{\sqrt{T}}$. For large $T$, $1 - \frac{L_\mu \eta_t}{2} \ge \frac{1}{2}$. Rearranging:
\begin{equation}
    \frac{\eta_t}{2} \|\nabla \mathcal{L}_\mu(\theta_t)\|^2 \le \mathcal{L}_\mu(\theta_t) - \mathbb{E}[\mathcal{L}_\mu(\theta_{t+1})] + \frac{L_\mu \eta_t^2 \sigma^2}{2}.
\end{equation}
Summing from $t=0$ to $T-1$ and dividing by $T \eta_t / 2$:
\begin{equation}
    \frac{1}{T} \sum_{t=0}^{T-1} \mathbb{E}[\|\nabla \mathcal{L}_\mu(\theta_t)\|^2] \le \frac{2(\mathcal{L}_\mu(\theta_0) - \mathcal{L}^*)}{\sqrt{T}} + \frac{L_\mu \sigma^2}{\sqrt{T}}.
\end{equation}

\textbf{Step 4: Relating back to $\mathcal{L}$.}
From \Cref{lem:smoothing_props}, $\|\nabla \mathcal{L}_\mu(\theta) - \nabla \mathcal{L}(\theta)\| \le \frac{\mu d L_g}{2}$. Using $\|a+b\|^2 \le 2\|a\|^2 + 2\|b\|^2$:
\begin{equation}
\begin{split}
    \|\nabla \mathcal{L}(\theta_t)\|^2 &\le 2\|\nabla \mathcal{L}_\mu(\theta_t)\|^2 \\ 
    & \quad + 2\|\nabla \mathcal{L}(\theta_t) - \nabla \mathcal{L}_\mu(\theta_t)\|^2 \\
    &\le 2\|\nabla \mathcal{L}_\mu(\theta_t)\|^2 + \frac{\mu^2 d^2 L_g^2}{2}.
\end{split}    
\end{equation}
Substituting the bound for $\nabla \mathcal{L}_\mu$, we obtain the final convergence rate:
\begin{equation}
    \min_{t} \mathbb{E}[\|\nabla \mathcal{L}(\theta_t)\|^2] \le O\left(\frac{1}{\sqrt{T}}\right) + O(\mu^2 d^2).
\end{equation}
This confirms that \method{} converges to a neighborhood of the stationary point, with the radius controlled by the smoothing parameter $\mu$ and dimension $d$.
\end{proof}

\begin{figure*}[t]
    \centering
    \begin{subfigure}[b]{0.48\textwidth}
    \includegraphics[width=\textwidth]{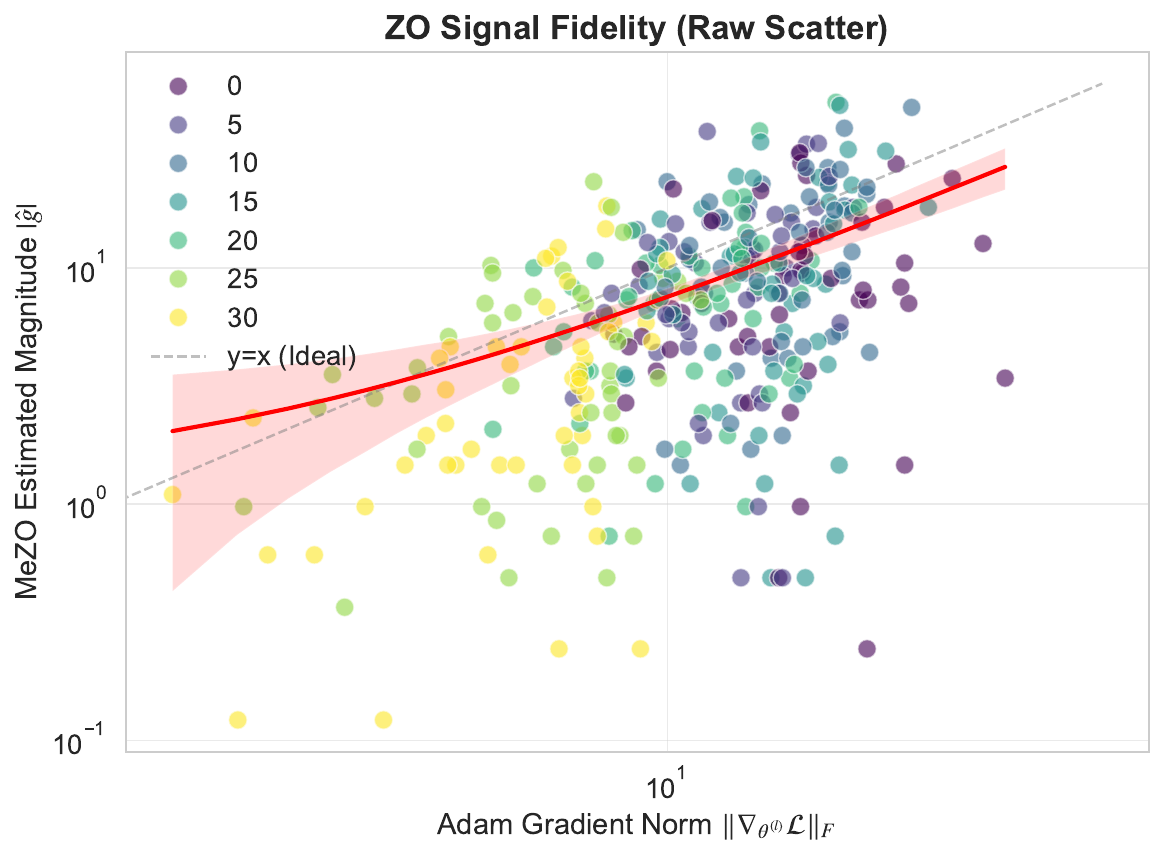}
        \caption{Raw Correlation Scatter}
        \label{fig:fidelity_scatter}
    \end{subfigure}
    \hfill
    \begin{subfigure}[b]{0.48\textwidth}
        \includegraphics[width=\textwidth]{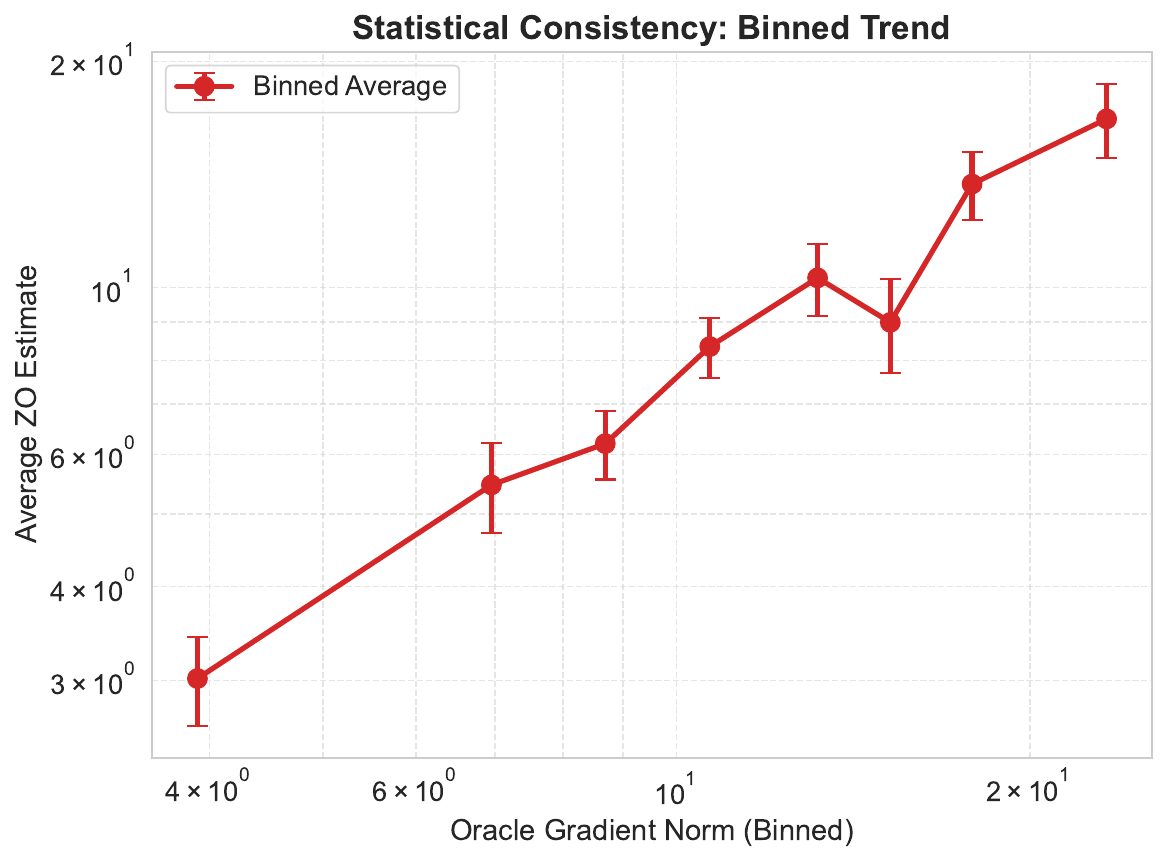}
        \caption{Binned Average Trend}
        \label{fig:fidelity_binned}
    \end{subfigure}
    \caption{
        \textbf{Signal Fidelity Analysis of ZO Estimates.} 
        We investigate whether the noisy Zeroth-Order estimates can serve as a valid proxy for layer sensitivity.
        \textbf{(a)} The scatter plot shows the ZO estimate magnitude $|\hat{g}|$ versus the Oracle gradient norm $\|\nabla_{\boldsymbol{\theta}^{(l)}} \mathcal{L}\|_F$ for individual layers across steps. Despite the variance inherent to random projection, a positive Spearman correlation ($\rho=0.48$) is observed.
        \textbf{(b)} By binning the oracle norms and averaging the corresponding ZO estimates, a strict monotonic relationship is revealed, confirming that ZO perturbations statistically preserve the relative importance ranking of layers.
        Empirical validation of ZO signal fidelity.
        (Left) A scatter plot comparing the oracle gradient norm $\|\nabla_{\boldsymbol{\theta}^{(l)}} \mathcal{L}\|_F$ (computed via backpropagation) against the zeroth-order estimated magnitude $|\hat{g}|$ for each layer during fine-tuning. Despite the high variance intrinsic to random perturbations, we observe a positive Spearman correlation ($\rho \approx 0.48$), indicating that ZO estimates preserve the relative ranking of layer sensitivity.
        (Right) Statistical consistency analysis using binned data. When layers are grouped by their oracle gradient norms, the average ZO estimate exhibits a strictly monotonic increasing trend with tight error bars (SEM). This confirms that, on expectation, ZO feedback serves as a reliable proxy for identifying sensitive layers, validating the feasibility of using ZO signals as rewards in a Multi-Armed Bandit framework.
    }
    \label{fig:signal_fidelity}
\end{figure*}
\begin{table*}[ht]
\centering
\caption{\textbf{Scalability Analysis on OPT Series.} We report accuracy (\%) across various model scales. \textbf{AdaLeZO} maintains competitive or superior performance compared to MeZO as model size increases. The best result between ZO methods is marked in \textbf{bold}.}
\label{tab:multi_size_result}
\begin{tabular}{lccccccc}
\toprule
\textbf{Method} & \textbf{SST-2} & \textbf{RTE} & \textbf{BoolQ} & \textbf{WSC} & \textbf{WIC} & \textbf{SQuAD} & \textbf{AVG.} \\ 
\midrule
\multicolumn{8}{c}{\textit{\textbf{OPT-13B}}} \\ 
\cmidrule(lr){1-8}
Zero-shot & 58.80 & 59.60 & 59.00 & 38.50 & 55.00 & 46.20 & 52.85 \\
MeZO & 91.45{\scriptsize$\pm$0.47} & 71.36{\scriptsize$\pm$2.80} & \textbf{70.60}{\scriptsize$\pm$3.46} & 56.09{\scriptsize$\pm$9.48} & \textbf{60.92}{\scriptsize$\pm$1.57} & \textbf{84.36}{\scriptsize$\pm$0.58} & 72.46{\scriptsize$\pm$3.06} \\
AdaLeZO & \textbf{92.20}{\scriptsize$\pm$0.41} & \textbf{71.72}{\scriptsize$\pm$0.83} & 70.00{\scriptsize$\pm$0.95} & \textbf{57.37}{\scriptsize$\pm$7.28} & 60.03{\scriptsize$\pm$1.70} & 83.52{\scriptsize$\pm$0.80} & \textbf{72.48}{\scriptsize$\pm$2.00} \\ 
\midrule
\multicolumn{8}{c}{\textit{\textbf{OPT-30B}}} \\ 
\cmidrule(lr){1-8}
Zero-shot & 56.70 & 52.00 & 39.10 & 38.50 & 50.20 & 46.50 & 47.17 \\
MeZO & 89.91{\scriptsize$\pm$0.31} & \textbf{65.46}{\scriptsize$\pm$1.04} & \textbf{68.10}{\scriptsize$\pm$0.87} & 57.10{\scriptsize$\pm$1.81} & 57.37{\scriptsize$\pm$8.18} & \textbf{80.31}{\scriptsize$\pm$0.51} & 69.71{\scriptsize$\pm$2.12} \\
AdaLeZO & \textbf{90.75}{\scriptsize$\pm$0.48} & 64.98{\scriptsize$\pm$0.95} & 67.63{\scriptsize$\pm$0.80} & \textbf{58.01}{\scriptsize$\pm$7.77} & \textbf{57.42}{\scriptsize$\pm$1.26} & 80.19{\scriptsize$\pm$0.98} & \textbf{69.83}{\scriptsize$\pm$\textbf{2.04}} \\ 
\bottomrule
\end{tabular}%
\end{table*}

\begin{table*}[t]
\centering
\caption{Comparison of memory footprint and latency breakdown across different ZO methods on OPT-6.7B. \textbf{Bold} indicates the best performance in each group. Our \textbf{Ada} series achieves significant speedup with negligible overhead.}
\label{tab:latency_breakdown}

\renewcommand{\arraystretch}{1.1} 
\setlength{\tabcolsep}{8pt}

\begin{tabular}{l c ccc c c}
\toprule
\multirow{2}{*}{\textbf{Method}} & \textbf{Peak Mem.} & \multicolumn{3}{c}{\textbf{Latency Breakdown (s)}} & \textbf{Total Time} & \multirow{2}{*}{\textbf{SpeedUp}} \\ 
\cmidrule(lr){3-5}
 & (GB) $\downarrow$ & Perturb & Forward & Update & (s) $\downarrow$ & $\uparrow$ \\ 
\midrule

MeZO    & 15.56 & 0.32 & 0.49 & 0.11 & 0.91 & 1.00$\times$ \\
AdaLeZO & \textbf{15.56} & \textbf{0.04} & 0.49 & \textbf{0.01} & \textbf{0.53} & \textbf{1.70$\times$} \\ 
\addlinespace 

LOZO    & 15.58 & 0.27 & 0.49 & 0.09 & 0.85 & 1.00$\times$ \\
AdaLoZO & \textbf{15.58} & \textbf{0.03} & 0.49 & \textbf{0.01} & \textbf{0.53} & \textbf{1.59$\times$} \\ 
\addlinespace

DiZO    & 27.96 & 0.31 & 0.49 & 0.11 & 0.92 & 1.00$\times$ \\
AdaDiZO & \textbf{27.96} & \textbf{0.03} & 0.49 & \textbf{0.01} & \textbf{0.54} & \textbf{1.71$\times$} \\ 
\addlinespace

HiZOO    & 27.96 & 0.60 & 0.75 & 0.37 & 1.73 & 1.00$\times$ \\
AdaHiZOO & \textbf{27.96} & \textbf{0.06} & 0.76 & \textbf{0.03} & \textbf{0.85} & \textbf{2.03$\times$} \\ 
\addlinespace

PseuZO & 16.71 & 0.21 & 0.53 & 1.34 & 2.08 & 1.00$\times$ \\
AdaPZO & \textbf{16.71} & \textbf{0.02} & 0.54 & \textbf{0.15} & \textbf{0.70} & \textbf{2.97$\times$} \\ 
\bottomrule
\end{tabular}
\end{table*}

\begin{table*}[ht]
\centering
\caption{Detailed comparison of per-step training latency (in seconds) and speedup ratios across tasks with varying sequence lengths. $T_{fwd}$ denotes forward pass time, and $T_{ovh}$ denotes overhead (perturbation + update).}
\label{tab:speedup_details}
\begin{tabular}{lc|ccc|ccc|c}
\toprule
\multirow{2}{*}{\textbf{Task}} & \multirow{2}{*}{\textbf{Length}} & \multicolumn{3}{c|}{\textbf{MeZO}} & \multicolumn{3}{c|}{\textbf{AdaLeZO (Ours)}} & \multirow{2}{*}{\textbf{Speedup}} \\
 &  & $T_{fwd}$ & $T_{ovh}$ & \textbf{Total} & $T_{fwd}$ & $T_{ovh}$ & \textbf{Total} &  \\ \midrule
Copa/SST2 & 15  & 0.047 & 0.421 & 0.469 & 0.048 & 0.044 & \textbf{0.091} & \textbf{5.13$\times$} \\
WIC       & 42  & 0.095 & 0.421 & 0.517 & 0.097 & 0.047 & \textbf{0.144} & \textbf{3.60$\times$} \\
WSC       & 51  & 0.114 & 0.423 & 0.537 & 0.116 & 0.048 & \textbf{0.164} & \textbf{3.28$\times$} \\
RTE       & 84  & 0.177 & 0.420 & 0.598 & 0.178 & 0.045 & \textbf{0.224} & \textbf{2.67$\times$} \\
CB        & 91  & 0.198 & 0.425 & 0.623 & 0.199 & 0.046 & \textbf{0.245} & \textbf{2.54$\times$} \\
BoolQ     & 132 & 0.283 & 0.421 & 0.704 & 0.286 & 0.045 & \textbf{0.331} & \textbf{2.13$\times$} \\
SQuAD     & 188 & 0.392 & 0.421 & 0.813 & 0.392 & 0.042 & \textbf{0.434} & \textbf{1.87$\times$} \\
ReCoRD    & 247 & 0.490 & 0.425 & 0.915 & 0.498 & 0.044 & \textbf{0.542} & \textbf{1.69$\times$} \\
DROP      & 307 & 0.622 & 0.425 & 1.048 & 0.620 & 0.043 & \textbf{0.663} & \textbf{1.58$\times$} \\
MultiRC   & 373 & 0.730 & 0.425 & 1.155 & 0.741 & 0.042 & \textbf{0.783} & \textbf{1.48$\times$} \\
\bottomrule
\end{tabular}%
\end{table*}

\begin{algorithm*}[t]
\caption{AdaLeZO: Adaptive Layer-wise Zeroth-Order Optimization}
\label{alg:adalezo}
\begin{algorithmic}[1]
\REQUIRE Model parameters $\theta \in \mathbb{R}^d$ partitioned into $L$ layers $\{\theta^{(1)}, \dots, \theta^{(L)}\}$; \\
Learning rate $\eta$; Perturbation scale $\mu$; Sampling ratio $\rho$; \\
\textcolor{gray}{// Bandit Hyperparameters:} Temperature $\tau$, EMA factor $\alpha$, Exploration $\gamma$, IPW Clipping $C$.

\STATE \textbf{Initialize:} Reward estimates $Q \gets \mathbf{0} \in \mathbb{R}^L$; Time $t \gets 0$.
\STATE \textbf{Initial Sampling:} Run \textsc{ResampleLayers}() to obtain initial $\mathcal{I}, \mathbf{n}, \mathbf{p}$.

\FOR{$t = 0, 1, \dots, T-1$}
    \STATE Sample random seed $s_t$.
    
    \STATE \textcolor{gray}{\textit{// Phase 1: Sparse Perturbation \& Loss Evaluation}}
    \STATE $\theta \gets \textsc{Perturb}(\theta, \mathcal{I}, \mu, s_t)$ \COMMENT{\textit{*Perturbation}}
    \STATE $\mathcal{L}_+ \gets \mathcal{L}(\theta)$ \COMMENT{\textit{*Forward Pass}}
    
    \STATE $\theta \gets \textsc{Perturb}(\theta, \mathcal{I}, -2\mu, s_t)$ \COMMENT{\textit{*Perturbation}}
    \STATE $\mathcal{L}_- \gets \mathcal{L}(\theta)$ \COMMENT{\textit{*Forward Pass}}
    
    \STATE $\theta \gets \textsc{Perturb}(\theta, \mathcal{I}, \mu, s_t)$ \COMMENT{\textit{*Perturbation}}
    
    \STATE Estimate scalar gradient proxy: $\hat{g}_{\text{scalar}} \gets \frac{\mathcal{L}_+ - \mathcal{L}_-}{2\mu}$

    \STATE \textcolor{gray}{\textit{// Phase 2: Count-Aware Sparse Update}}
    \FOR{each active layer $l \in \mathcal{I}$}
        \STATE Set seed $(s_t + l)$ and sample noise $z^{(l)} \sim \mathcal{N}(\mathbf{0}, \mathbf{I}_{d_l})$.
        
        \STATE \textcolor{gray}{\textit{// Calculate Adaptive Weight}}
        \STATE $w_l \gets \min\left(\frac{1}{K \cdot p_l}, C\right)$ \COMMENT{Clipped IPW weight}
        \STATE $n_l \gets \mathbf{n}[l]$ \COMMENT{Multiplicity from sampling with replacement}
        
        \STATE \textcolor{gray}{\textit{// Update Layer Parameters}}
        \STATE $\hat{g}^{(l)} \gets \hat{g}_{\text{scalar}} \cdot w_l \cdot n_l \cdot z^{(l)}$ 
        \STATE $\theta^{(l)} \gets \theta^{(l)} - \eta \cdot \hat{g}^{(l)}$ \COMMENT{\textit{*Update}}
        
        \STATE \textcolor{gray}{\textit{// Update Bandit Estimates}}
        \STATE $Q_l \gets (1-\alpha)Q_l + \alpha |\hat{g}_{\text{scalar}}|$ \COMMENT{Update reward tracking}
    \ENDFOR
    
    \STATE \textcolor{gray}{\textit{// Phase 3: Adaptive Re-sampling for Next Step}}
    \STATE \textsc{ResampleLayers}()
\ENDFOR

\vspace{0.2cm}
\hrule
\vspace{0.1cm}

\STATE \textbf{Procedure} \textsc{Perturb}($\theta, \mathcal{I}, \delta, s$):
\FOR{layer $l \in \mathcal{I}$}
    \STATE Set seed $(s + l)$; Sample $z^{(l)} \sim \mathcal{N}(\mathbf{0}, \mathbf{I}_{d_l})$
    \STATE $\theta^{(l)} \gets \theta^{(l)} + \delta \cdot z^{(l)}$
\ENDFOR
\STATE \textbf{return} $\theta$

\vspace{0.2cm}
\hrule
\vspace{0.1cm}

\STATE \textbf{Procedure} \textsc{ResampleLayers}():
\STATE \quad $K \gets \max(1, \lfloor \rho L \rfloor)$
\STATE \quad \textcolor{gray}{// Compute Sampling Probabilities}
\STATE \quad $\mathbf{p}_{\text{soft}} \gets \text{Softmax}(Q / \tau)$
\STATE \quad $\mathbf{p} \gets (1-\gamma)\mathbf{p}_{\text{soft}} + \gamma/L$ \COMMENT{Mix with uniform exploration}
\STATE \quad \textcolor{gray}{// Sampling with Replacement}
\STATE \quad Sample indices $\mathcal{S} \sim \text{Multinomial}(K, \mathbf{p})$
\STATE \quad $\mathcal{I} \gets \text{Unique}(\mathcal{S})$ \COMMENT{Set of unique active layers}
\STATE \quad $\mathbf{n} \gets \text{CountFrequencies}(\mathcal{S})$ \COMMENT{Calculate multiplicity $n_l$ for $l \in \mathcal{I}$}
\end{algorithmic}
\end{algorithm*}

\end{document}